\documentclass[11pt]{article}

\usepackage{neurips_2026}

\usepackage{graphicx} 
\usepackage{xcolor}
\usepackage{amsmath,amssymb,amsthm,mathtools}
\usepackage{thmtools}
\usepackage{enumitem}
\usepackage{algorithm}
\usepackage[noend]{algpseudocode}
\usepackage{microtype}
\usepackage[hidelinks]{hyperref}
\usepackage[nameinlink,capitalize]{cleveref}

\newtheorem{theorem}{Theorem}[section]

\newtheorem{lemma}[theorem]{Lemma}
\newtheorem{claim}[theorem]{Claim}
\newtheorem{corollary}[theorem]{Corollary}
\newtheorem{remark}[theorem]{Remark}
\newtheorem{definition}[theorem]{Definition}

\crefname{algorithm}{Algorithm}{Algorithms}
\Crefname{algorithm}{Algorithm}{Algorithms}
\floatname{algorithm}{Algorithm}
\algrenewcommand\algorithmiccomment[1]{\hfill {\footnotesize$\triangleright$ #1}}

\newcommand{\R}{\mathbb{R}}
\newcommand{\E}{\mathbb{E}}
\newcommand{\Z}{\mathbb{Z}}
\newcommand{\poly}{\mathrm{poly}}
\newcommand{\N}{\mathcal{N}}
\newcommand{\calS}{\mathcal{S}}
\newcommand{\Prb}{\mathbb{P}}
\newcommand{\eps}{\varepsilon}

\newcommand{\idx}{\mathsf{idx}}
\newcommand{\Unif}{\mathrm{Unif}}
\newcommand{\diag}{\mathrm{diag}}
\newcommand{\dd}{\,\mathrm{d}}

\newcommand{\quant}{\ensuremath{\mathsf{quant}}}

\newcommand{\norm}[1]{\left\|#1\right\|}
\newcommand{\ip}[2]{\left\langle #1,#2 \right\rangle}

\usepackage{tikz}
\usepackage{pgfplots}
\usetikzlibrary{decorations.pathreplacing, positioning}

\pgfplotsset{compat=1.17}

\title{Provable Quantization with Randomized Hadamard Transform}

\date{}

\author{%
  Ying Feng \\
  \texttt{yingfeng@mit.edu} \\
  \And
  Piotr Indyk \\
   \texttt{indyk@mit.edu}\\
  \And
  Michael Kapralov \\
  \texttt{michael.kapralov@epfl.ch}\\
  \And
  Dmitry Krachun \\
  \texttt{dk9781@princeton.edu}\\
  \And
  Boris Prokhorov \\
  \texttt{boris.prokhorov@epfl.ch}\\
}

\begin{document}

\maketitle

\begin{abstract}
Vector quantization via random projection followed by scalar quantization is a fundamental primitive in machine learning, with applications ranging from similarity search to federated learning and KV cache compression. While dense random rotations yield clean theoretical guarantees, they require $\Theta(d^2)$ time. The randomized Hadamard transform $HD$ reduces this cost to $O(d \log d)$, but its discrete structure complicates analysis and leads to weaker or purely empirical compression guarantees. 

In this work, we study a variant of this approach: \emph{dithered quantization} with a single randomized Hadamard transform. Specifically, the quantizer applies $HD$ to the input vector and subtracts a random scalar offset before quantizing, injecting additional randomness at negligible cost.   We prove that this approach is unbiased and provides mean squared error bounds that asymptotically match those achievable with
truly random rotation matrices. In particular, we prove that a dithered version of TurboQuant achieves mean squared error $\bigl(\pi\sqrt{3}/2 + o(1)\bigr) \cdot 4^{-b}$ at $b$ bits per coordinate, where the $o(1)$ term vanishes uniformly over all unit vectors and all dimensions as the number of quantization levels grows. 
\end{abstract}

\section{Introduction}
 
Vector quantization is a fundamental technique for compressing real-valued vectors in $\R^d$ into compact binary representations. Given a vector $x \in \R^d$, a quantizer $\textsc{VectorQuant}$ maps $x$ to a sequence of $b\cdot d$ bits $\textsc{VectorQuant}(x)$. The bits can then be input to another procedure $\textsc{VectorDequant}$, obtaining the decoded vector $\widetilde{x}=\textsc{VectorDequant}(\textsc{VectorQuant}(x))$. The size $b\cdot d$ of the compressed representation  controls the distortion $\|x-\widetilde{x}\|^2$, or the expected distortion $E[\|x-\widetilde{x}\|^2]$ if the procedures are randomized. Alternatively, instead of decoding $\widetilde{x}$, some function of it (e.g., the distance to another vector $y$) is estimated. 
This primitive plays an important role in many areas of machine learning, including similarity search~\cite{charikar2002similarity, jegou2010product, datar2004locality, gao2024rabitq,gao2025practical}, federated learning~\cite{vargaftik2022eden, vargaftik2021drive, ben2024accelerating}, efficient transformers~\cite{zandieh2025turboquant}, and model compression~\cite{gholami2022survey,ashkboos2024quarot,tseng2024quip}.
 
A popular and well-studied approach to vector quantization is {\em random projection followed by scalar quantization} \cite{charikar2002similarity, datar2004locality,dong2008asymmetric,andoni2015practical,vargaftik2022eden,vargaftik2021drive,gao2024rabitq,ashkboos2024quarot,tseng2024quip,gao2025practical,zandieh2025turboquant}. Here, one computes $\textsc{VectorQuant}(x) = \quant(Rx)$, where $R$ is a random rotation matrix and $\quant$ is a scalar quantizer (e.g., the sign function) applied coordinate-wise to the vector $Rx$. A key advantage of this approach is that it is {\em data-oblivious}: it requires no complex preprocessing tailored to the input distribution. Moreover, this method is amenable to theoretical analysis, since each coordinate of $Rx$ is approximately normally distributed when $R$ is a uniformly random rotation.

However, applying a dense random rotation matrix $R$ requires $\Theta(d^2)$ time, which can be prohibitively expensive for high-dimensional vectors. To overcome this bottleneck, \cite{ailon2009fast, sarlos2006improved,woolfe2008fast,krahmer2011new,halko2011finding} and others proposed to replace random projections with {\em randomized Hadamard transforms}. Specifically, one replaces $R$ with the structured matrix $HD$, where $H \in \R^{d \times d}$ is the normalized Hadamard matrix and $D$ is a diagonal matrix with independent Rademacher ($\pm 1$) entries on the diagonal. The resulting transform can be applied in $O(d \log d)$ time using the fast Walsh-Hadamard algorithm, yielding a substantial speedup over dense rotations. Thanks to its low complexity and efficient parallel implementation, randomized Hadamard transform is often the method of choice in industrial deployments, see, e.g.,~\cite{christiani2025rotational,hadacore2024blog}.

Unfortunately, $HD$ is no longer a uniformly random rotation; in fact, it is not even a continuous random variable. As a consequence, the coordinates of $HDx$ have a more complex distribution than in the dense case, and theoretical guarantees for quantization become harder to establish. Existing results are typically either empirical~\cite{andoni2015practical, suresh2017distributed,ordentlich2025optimal} or provable but with higher dimension or number of bits compared to those achievable with truly random rotations~\cite{ailon2013almost,krahmer2011new,tropp2011improved,le2014fastfood,cherapanamjeri2022uniform,ben2024accelerating}. A natural remedy is to use more general {\em structured matrices} obtained by composing multiple randomized Hadamard transforms, possibly interleaved with other (e.g., sparse diagonal) matrices~\cite{dasgupta2011fast,le2014fastfood, zhangfaster, kennedy2016fast,zilca2026approximating}. In particular, the last paper  independently showed that the composition of just {\em two} randomized Hadamard transforms suffices to approximate\footnote{They show that, for each vector $x$ and coordinate $i$,  $(HD_1 HD_2 x)_i$ approximates $(Rx)_i$ in Kolmogorov distance up to an error of $O(d^{-1/5})$. See Related Work for further discussion.}

uniform random rotation in high dimensions. However, those matrices are still more complex than a single randomized transform. Furthermore, the approximation error translates into an increased bound for the expected distortion after quantization.

In this paper, we study a simple alternative to composing multiple randomized Hadamard transforms, namely {\em dithering}. Specifically, we consider quantizers where each coordinate of $HDx$ is shifted by a random amount (controlled by a parameter $U$) before applying the scalar quantizer $\quant$.

Such quantizers are quite simple, as they require only one application of the randomized Hadamard transform, followed by a subtraction of a random vector. At the same time,  the random offset injects additional randomness into the quantization process, which facilitates the analysis.
\subsection{Our results}

We consider the following data-oblivious vector quantization problem.  For an input vector
$x\in \R^d$, a quantizer $\textsc{VectorQuant}:\R^d \rightarrow \{0, 1\}^{b\cdot d}$ outputs a binary string of $bd$ bits, where $b$ is the bit-width per coordinate.  The dequantizer $\textsc{VectorDequant}:\{0, 1\}^{b\cdot d}\rightarrow \R^d$ maps this string back to a vector $\widetilde x\in \R^d$. We aim to design a pair of quantizer and dequantizer such that for any desired bit-width $b$ minimize
the following expected distortion measure for any (worst-case) vectors:
\begin{equation*}
    \sup_{x\in \calS^{d-1}} \E\|x-\widetilde x\|_2^2,
\end{equation*}
where the expectation is over the internal randomness of the quantizer.  The restriction to unit vectors is only for normalization: for nonzero inputs one can store the norm separately, quantize $x/\|x\|_2$, and rescale the decoded vector. In the following, we assume that $d$ is padded to a power of $2$, thus there exists a $d\times d$ Hadamard matrix.

Our main result shows that randomized Hadamard mappings $HD$, after dithering,  provides compression guarantees that asymptotically match those achievable with truly random rotation matrices $R$. Moreover, our quantizer is unbiased, which is a desirable
property for numerous applications. In particular we show that our proposed algorithm has the following guarantees:

\begin{theorem}
\label{thm:vector-main}
For every dimension $d$ and every unit vector $x\in \calS^{d-1}$, \cref{alg:hsgb-unbiased} satisfies
\[\E_{D, U}[\widetilde x] = x\]
and
\begin{equation}\label{eq:main-vector}
    \E_{D,U}\|x-\widetilde x\|_2^2
    \le \frac{\pi\sqrt3/2+o(1)}{4^b},
    \qquad b\to\infty,
\end{equation}
where the $o(1)$ term vanishes as $b \to \infty$, uniformly over $d$ and $x$.
\end{theorem}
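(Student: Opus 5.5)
The plan is to reduce everything to a one-dimensional statement about a single coordinate, conditional on $D$, and then average over $D$ using a sub-Gaussian moment bound that matches the Gaussian one exactly. I assume \cref{alg:hsgb-unbiased} works as follows: it sets $y = \sqrt d\, HDx$, maps each coordinate through the Gaussian-optimal (Panter--Dite) compander $G$, and quantizes on a uniform grid of $2^b$ cells shifted by the shared offset $U$. The dequantizer interpolates between the two neighbouring reconstruction levels so that the per-coordinate estimate $\hat y_i$ satisfies $\E_U[\hat y_i\mid D]=y_i$. It then outputs $\widetilde x = (HD)^\top \hat y/\sqrt d$. If the algorithm differs in details, the same skeleton should still apply.

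\textbf{Step 1 (unbiasedness and exact decomposition).} Since $HD$ is orthogonal for every realisation of $D$, $\E_U[\widetilde x\mid D] = (HD)^\top \E_U[\hat y\mid D]/\sqrt d = (HD)^\top HDx = x$. Taking the expectation over $D$ gives $\E[\widetilde x]=x$. Orthogonality also gives
\[
\|x-\widetilde x\|_2^2 = \tfrac1d\|y-\hat y\|_2^2 = \tfrac1d\sum_i (y_i-\hat y_i)^2 .
\]
Hence $\E\|x-\widetilde x\|^2 = \frac1d\sum_i \E_D[\phi_b(y_i)]$, where $\phi_b(t)=\E_U[(t-\hat t)^2]$ is the dither-averaged error of the scalar quantizer at the input $t$. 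No cross terms appear, and the correlation between coordinates is irrelevant.

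\textbf{Step 2 (scalar error profile).} Averaging over $U$ smooths the error into a deterministic function of $t$. Inside the granular region, $\phi_b(t) = (c+o(1))\,w_b(t)^2$, where $w_b(t)=\Delta_b/G'(t)$ is the local cell width and $c$ is the constant coming from interpolation under a uniformly placed cell. With $G'\propto f^{1/3}$ for the standard normal density $f$, this gives
\[
\phi_b(t)\le (K+o(1))\,4^{-b} e^{t^2/3},
\]
where $K$ is chosen so that $\E_{Z\sim\N(0,1)}[K e^{Z^2/3}] = K\sqrt3$ equals the target constant $\pi\sqrt3/2$. For this to work the algorithm's normalisation must be the Panter--Dite one; I would check that first. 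The overload region $|t|>T_b$, with $T_b\to\infty$ slowly, is handled separately by a crude bound $\phi_b(t)\lesssim t^2$. The $o(1)$ term here must be uniform in $t\le T_b$, which needs a Taylor/Lipschitz control of $G^{-1}$ on cells of width $\Delta_b\to0$.

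\textbf{Step 3 (averaging over $D$ --- the crux).} Each $y_i=\sum_j \pm\sqrt d H_{ij} x_j D_{jj}$ is a Rademacher sum with coefficients whose squares sum to $\|x\|^2=1$, so it is $1$-sub-Gaussian by Hoeffding. The standard consequence is
\[
\E e^{\lambda y_i^2}\le (1-2\lambda)^{-1/2}\quad\text{for } \lambda<1/2 .
\]
At $\lambda=1/3$ this is exactly $\sqrt3=\E e^{Z^2/3}$, so the granular contribution is at most $(\pi\sqrt3/2+o(1))4^{-b}$ for every $d$ and every $x$. The overload contribution is at most $\E[y_i^2\mathbf 1_{|y_i|>T_b}]\le C T_b^2 e^{-T_b^2/2}$, again by Hoeffding. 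Choosing $T_b$ so that $G$ places the outermost cells at $T_b\approx\sqrt{2\ln 4^{b}}\,(1+o(1))$ makes this $o(4^{-b})$. The main obstacle is this balance: the tail term must be $o(4^{-b})$ while $e^{t^2/3}$ is still integrated safely and the granular approximation stays uniform up to $T_b$. All bounds depend only on sub-Gaussianity, which yields the required uniformity over $d$ and $x$.
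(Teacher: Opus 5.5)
Your outer skeleton matches the paper's: orthogonality of $HD$ reduces the problem to one coordinate, linearity lifts scalar unbiasedness to the vector, the analysis splits into central and tail regions, and $1$-sub-Gaussianity gives $\E e^{y_i^2/3}\le\sqrt3=\E e^{Z^2/3}$. However, the two facts specific to \cref{alg:hsgb-unbiased} are assumed rather than proved.

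\textbf{Unbiasedness.} You posit a dequantizer that ``interpolates'' so that $\E_U[\hat y_i\mid D]=y_i$. With one shared offset and a nonlinear compander, this is the whole difficulty; the midpoint reconstruction of \cref{alg:hsgb} is biased. The paper writes $\quant(t)=G(F(t)+V\delta)$ with $\delta=1/(B-1)$ and $V\sim\Unif[-\frac12,\frac12]$. It builds $G$ from the recurrence $G(r+\delta)-G(r)=\delta\,(F^{-1})'(r+\delta/2)$, anchored by $G=F^{-1}$ on a window around $1/2$. This gives $\frac1\delta\int_{r-\delta/2}^{r+\delta/2}G=F^{-1}(r)$ (\cref{eq:G}), which is what yields $\E_U\quant(t)=t$.

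\textbf{The constant.} You choose $K$ so that $K\sqrt3=\pi\sqrt3/2$ instead of deriving it. What has to be shown is that this particular unbiased reconstruction still has granular error $(1+o(1))\delta^2/(12F'(t)^2)$. Equivalently, $G(F(t)+v\delta)=t+v\delta/F'(t)+o(\delta/F'(t))$ uniformly for $|t|\le\sqrt{5\log B}$. The paper proves this in \cref{lem:G-requirements} via a midpoint-rule comparison with $F^{-1}$ and bounds on $(F^{-1})'''$. It is not automatic: unbiased randomized rounding between the two neighbouring levels has dithered error $w^2/6$, which would double the constant.

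\textbf{The overload bound.} $\phi_b(t)\lesssim t^2$ is false here. The outermost cell is unbounded and has a single reconstruction value for each $U$. Unbiasedness as $t\to\pm\infty$ therefore forces that value to blow up as $U$ approaches an endpoint. For \cref{alg:hsgb-unbiased}, when $F(t)\ll\delta$ the error $\E_U(t-\quant(t))^2$ is of order $\delta/(|t|F'(t))\propto\delta e^{t^2/6}/|t|$, which is not $O(t^2)$. The paper uses the envelope $C\bigl(1+t^2+\delta^2/F'(t)^2\bigr)$. The tail then also needs $B^{-2}\E\bigl[e^{y_i^2/3}\mathbf 1_{\{|y_i|>T_b\}}\bigr]=o(B^{-2})$, which your sub-Gaussian bound does supply.

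\textbf{The threshold.} Your choice of $T_b$ is inconsistent. ``$T_b\to\infty$ slowly'' cannot make the tail $o(4^{-b})$. At exactly $T_b^2=2\ln 4^b=4\ln B$, one has $T_b^2e^{-T_b^2/2}\approx 4\ln B\cdot B^{-2}$, which is not $o(B^{-2})$. Nor is $T_b$ where the compander places the outermost cells. Those boundaries sit near $\sqrt{6\ln B}$, where $|t|\,\delta/F'(t)$ is of order one and the linearization breaks down. The workable window is $(4+\epsilon)\ln B\le T_b^2\le(6-\epsilon)\ln B$, and the paper takes $T_b^2=5\ln B$.
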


The scaling constant in this  theorem matches the bound obtained for TurboQuant in~\cite{zandieh2025turboquant} that assumed {\em fully random} rotation matrices, up to the $o(1)$ term. 

Furthermore, as noted in~\cite{ben2026note}, this also provides guarantees for a special case of the EDEN quantization method~\cite{vargaftik2022eden}, also using fully random rotation matrices. Thus, \cref{thm:vector-main} can be viewed as justifying the use a single randomized Hadamard matrix in those methods.

We also provide a quantizer for inner products. Similarly to TurboQuant, we achieve this result by additionally quantizing the residual and showing that the quantization error is uncorrelated with the direction of the vector we dot product $x$ with. Somewhat surprisingly, a classical mixed moment property of Rademacher random variables suffices to establish this fact with random rotations replaced by only a single $HD$ matrix:

\begin{theorem}\label{thm:inner-product-error} Let $x \in \calS^{d-1}$ be a unit vector and let
$y \in \R^d$. Let $\widehat{x}$ be the reconstructed vector produced by \cref{alg:inner_product_quantizer} with a base bit-width $b$.
Then
\[
    \E |\ip{y}{\widehat{x} - x}|^2
    \le
    13\left(\frac{\pi\sqrt{3}}{2}+1+o(1)\right)
    \frac{\norm{y}^2}{d \cdot 4^b},
\]
where the $o(1)$ term vanishes as $b \to \infty$, uniformly over $d$ and $x$.

Moreover, the output of \textsc{Quantize}$(x)$ from \cref{alg:inner_product_quantizer} can be stored using 
$d b + (3 + \frac{1}{2\ln 2})d + O(\log(b + \log d))$
total bits. 
\end{theorem}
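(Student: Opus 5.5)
We have not yet seen \cref{alg:inner_product_quantizer}, so we take it to mirror the TurboQuant inner-product construction. First form the unbiased reconstruction $\widetilde x$ of \cref{thm:vector-main}. Then quantize the residual $r = x-\widetilde x$ with a one-bit (sign-type) quantizer that shares the transform $HD$. It stores $\|r\|$ and the signs of $HD r$, possibly with fresh dithering, and decodes $\widehat x = \widetilde x + \widehat r$ with $\E[\widehat r\mid r]=r$. The proof then has three steps.

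\emph{Step 1: decomposition.} Write $\ip{y}{\widehat x - x} = \ip{y}{\widehat r - r}$. By the tower property, $\E|\ip{y}{\widehat x-x}|^2 = \E\bigl[\E[\,|\ip{y}{\widehat r - r}|^2 \mid r\,]\bigr]$. Conditionally on $r$, the residual stage is a one-bit unbiased quantizer applied to $r/\|r\|$. It should therefore give a conditional bound $\E[|\ip{y}{\widehat r-r}|^2\mid r] \le C\,\|r\|^2\,\E\bigl[\sum_i (HD)^\top\text{-weighted terms}\bigr]$. If the residual stage used independent randomness, this would reduce to $C\|y\|^2\|r\|^2/d$ by the flatness of $H$: each row of $HD$ has entries $\pm 1/\sqrt d$, and independent zero-mean per-coordinate errors give variance $\sum_i \ip{(HD)^\top e_i}{y}^2 \sigma_i^2 \le \|y\|^2 \max_i\sigma_i^2$ after scaling. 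The factor $1/d$ comes from each coordinate error being $O(\|r\|^2/d)$.

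\emph{Step 2: handling the shared $HD$.} This is the main obstacle. The same $D$ is used in both stages, so $r$ depends on $D$, and the coordinate errors of the residual quantizer are not independent of the directions $(HD)^\top e_i$. My plan is to expand the square as $\sum_{i,j}\ip{h_i}{Dy}\ip{h_j}{Dy}\,\E[\eps_i\eps_j\mid D]$, where $h_i$ are the rows of $H$. The dithers make the per-coordinate errors $\eps_i$ conditionally mean-zero and uncorrelated across $i$ given $D$, so only diagonal terms survive. I then need to bound $\E_D\bigl[\ip{h_i}{Dy}^2 \cdot \sigma_i(D)^2\bigr]$, where $\sigma_i^2 \lesssim \|r\|^2/d$ depends on $D$. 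For this I would use the mixed moment property of Rademacher variables, $\E[D_aD_bD_cD_e]$ is nonzero only for paired indices. Combined with Cauchy--Schwarz, this gives $\E\ip{h_i}{Dy}^4 \le 3\|y\|^4/d^2$. Together with a fourth-moment bound on $\|r\|^2$ coming from the proof of \cref{thm:vector-main}, this decouples the two factors at a constant cost. I expect the explicit constant $13$ to arise from this step, combining $\sqrt3$-type factors from the Rademacher fourth moment (Khintchine) with the one-bit quantizer's variance constant. The resulting bound is $\lesssim \|y\|^2 (\E\|r\|^2 + \text{lower order})/d$.

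\emph{Step 3: plug in.} From \cref{thm:vector-main}, $\E\|r\|^2 \le (\pi\sqrt3/2+o(1))4^{-b}$. The ``$+1$'' inside the parentheses presumably accounts for the extra error from quantizing the norm $\|r\|$, or for the diagonal mean-square term of the dither. Collecting terms gives the stated inequality, uniformly in $d$ and $x$.

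For the storage bound, we count bits directly. The first stage uses $db$ bits. The residual signs use $d$ bits. The dither and index overhead use $2d$ bits. The term $\frac{d}{2\ln 2}$ is the expected entropy of the sign-pattern refinement, or the cost of encoding the per-coordinate scale. The quantized norms and the parameters, which range over $O(b+\log d)$ magnitudes, use $O(\log(b+\log d))$ bits.
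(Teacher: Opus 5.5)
There is a genuine gap. You are bounding a different algorithm from \cref{alg:inner_product_quantizer}, and your Step~2, which carries the argument, does not go through.

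The actual residual stage does not reuse $D$. It works as follows:
\begin{itemize}
\item It draws a fresh, independent $D_{\mathrm{res}}$.
\item It quantizes the scale $s=\norm{r}/\sqrt d$ to $\sigma\in[s,2s)$, or to $\sigma=0$ if $s<\tau_B=1/(dB)$.
\item For each coordinate of $v=HD_{\mathrm{res}}r$ it picks an adaptive level $\ell_i$ with $|v_i|\le R_i=\sigma 2^{\ell_i}$, and stochastically rounds $v_i$ to $\pm R_i$.
\end{itemize}

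Under your shared-$D$ model the plan fails for three reasons.
\begin{itemize}
\item The residual $r$ is a nonlinear function of $D$, so no Rademacher moment identity can decouple $\ip{h_i}{Dy}^2$ from $\sigma_i(D)^2$.
\item Cauchy--Schwarz would need $\E\norm{r}^4$, which \cref{thm:vector-main} does not provide. It would also produce $\sqrt{\E\norm{r}^4}$ instead of $\E\norm{r}^2$, which the $\pi\sqrt3/2$ bound does not control.
\item With shared $D$ one has $HDr=y-\widetilde y$, up to the scalar factor from the projection. So the residual in the transformed basis is just the vector of per-coordinate scalar errors and is not flattened at all. Your claim $\sigma_i^2\lesssim\norm{r}^2/d$ for a one-bit quantizer storing only $\norm{r}$ is therefore unjustified: unbiased one-bit rounding needs a scale at least $\max_i|v_i|$. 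Even with fresh signs, bounding through $\max_i\sigma_i^2$ would cost a $\log d$ factor.
\end{itemize}

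The missing idea is to condition on stage~1, so that $r$ is fixed. Given $D_{\mathrm{res}}$, the rounding errors are then independent and centered, so
\[
\E[\,|\ip{y}{\widehat r-r}|^2\mid D_{\mathrm{res}},r]=\sum_i u_i^2(R_i^2-v_i^2), \qquad u=HD_{\mathrm{res}}y .
\]
The adaptive levels give $R_i^2-v_i^2\le s^2(A^2+3Y_i^2)$, with $A=\sigma/s<2$ and $Y_i=v_i/s$. So each coordinate's variance scales with $v_i^2$ itself rather than with a maximum over coordinates.

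Now $X_i=\sqrt d\,u_i/\norm{y}$ and $Y_i$ are both unit-norm linear Rademacher forms in the same fresh signs. The mixed fourth-moment identity $\E[X_i^2Y_i^2]=1+2\ip{a}{b}^2-2\sum_j a_j^2b_j^2\le 3$ gives $\E[X_i^2(A^2+3Y_i^2)]\le A^2+9<13$. Hence $\E|\ip{y}{\widehat r-r}|^2\le 13\norm{y}^2\norm{r}^2/d$. That is where $13$ comes from, not from Khintchine-type $\sqrt3$ factors.

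The ``$+1$'' comes from the $\sigma=0$ case, which contributes at most $\norm{y}^2/(dB^2)$. $\E\norm{r}^2$ is bounded by \cref{thm:vector-main-biased}, together with the fact that projecting onto the unit ball cannot increase the distance to $x$.

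Your bit count is also guessed rather than derived. The paper's count is:
\begin{itemize}
\item $(2+\tfrac{1}{2\ln 2})d$ bits for unary-coding the levels, using $\ell_i\le 1+\log_2^+Z_i$ with $Z_i=|v_i|/\sigma$, the inequality $\log_2^+Z\le Z^2/(2\ln 2)$, and $\sum_i Z_i^2=\norm{r}^2/\sigma^2\le d$.
\item $d$ sign bits.
\item $O(\log\log(dB))$ bits for $\idx_\sigma$, which is finite only because the projection guarantees $\norm{r}\le 2$.
\end{itemize}
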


\subsection{Related work}

In addition to the discussion in the introduction, in this section we elaborate on connections with some prior/concurrent works in more detail.

 Dithering, i.e., the use of a random offset in quantization,  has a long history in quantization theory~\cite{schuchman1964dither,zamir1992universal}. In the context of random projections combined with quantization, dithering has been employed in~\cite{datar2004locality,dong2008asymmetric} and more recently in~\cite{gao2024rabitq}. However, their analysis assumed a truly random matrix $R$.

Another approach to analyzing quantization after a single randomized Hadamard transform appears in~\cite{ben2024accelerating},  where the authors use the Bentkus--Dzindzalieta tail bound  to compare the tails of normalized Hadamard coordinates with Gaussian tails. They then partition the range of coordinates into intervals and bound the quantization error on each interval separately, leveraging the tail bound to control contributions from the outer intervals. 
However, as the authors themselves note, this approach yields only a loose bound on the expected distortion.

An alternative approach to bridging the gap between a single randomized Hadamard transform and a fully random rotation is taken by a very recent independent work~\cite{zilca2026approximating}, which shows that composing \emph{two} randomized Hadamard transforms $HD_1 HD_2$ suffices to approximate a uniform random rotation in Kolmogorov distance, with an error of $O(d^{-1/5})$ per coordinate. While this result provides a general-purpose surrogate for random rotations, the procedure is more complex, and applying it to quantization incurs an additive distortion overhead that does not vanish for fixed $d$.

\section{Construction of the quantizer}

For the ease of exposition, we separate the main mean squared error argument from the unbiasedness property. We first present a simpler baseline algorithm, \cref{alg:hsgb}, which achieves the claimed mean-squared error bound (\cref{eq:main-vector}) but is slightly biased. In the later \cref{subsec:unbiased}, we present \cref{alg:hsgb-unbiased}, which is an unbiased variant of \cref{alg:hsgb}.

Our quantizer independently and randomly flips the signs of the input vector coordinates, mixes the coordinates using a normalized Hadamard transform, and then applies the same scalar quantizer to every transformed coordinate. The scalar quantizer is a dithered version of Lloyd quantization, which was also used in Turboquant \cite{zandieh2025turboquant} and EDEN \cite{vargaftik2022eden}.  We define the scalar quantizer now, start by introducing some basic notations.
 Let
\[
    \varphi(t)=(2\pi)^{-1/2}e^{-t^2/2}
\]
be the standard Gaussian density, and define
\[
    F(t):=\frac1A\int_{-\infty}^t \varphi(s)^{1/3}\dd s, \qquad \text{ where }A:=\int_\R \varphi(t)^{1/3}\dd t.
\]

Such $F$ is the CDF of a Gaussian random variable with mean $0$ and variance $3$ (see \cref{clm:F-distribution} for the proof). In \cref{alg:hsgb}, the scalar quantizer applies  $F$ to each coordinate, imposes a shifted uniform grid on the resulting quantile levels, and reconstructs each grid bucket at the inverse image of its midpoint, i.e. reconstructs each bucket at the corresponding middle quantile.

Specifically, we choose a uniformly random shift parameter $U\in [0, 1]$ and denote the parameter $B = 2^b$ (the target codebook size, i.e. number of possible quantized scalar values). Throughout this paper, we use $[B]$ to denote $\{0, 1, ... B - 1\}$. For $j \in [B]$, we  define grid points 
\[
    h_j:= (j+U)/B.
\]
Define  $h_0:=0$ and $h_B:=1$. Now for $j \in [B]$ define buckets 
\[
    \mathcal B_j\gets\{t\in\R: h_j\le F(t)<h_{j+1}\}.
\]
Finally, for $j \in [B]$ define reconstruction values 
\begin{equation}\label{eq:q-def}
q_j\gets F^{-1}\bigl((h_j+h_{j+1})/2\bigr).
\end{equation}
See Fig.~\ref{fig:scalar_quantizer} below for an illustration.

\begin{figure}[htpb]
    \hspace{-0.8cm}
    \begin{tikzpicture}[
        declare function={
            F(\t) = 1 / (1 + exp(-1.5*\t));
            Finv(\y) = -ln(1/\y - 1)/1.5;
            f(\t) = 3 * exp(-1.5*\t) / ((1 + exp(-1.5*\t))^2);
        }
    ]
    \begin{axis}[
        width=14cm, height=9cm,
        axis lines=left,
        xmin=-4.2, xmax=4.2,
        ymin=0, ymax=1.15,
        xlabel={$t$ (Input Coordinate)},
        ylabel={$F(t)$ (Quantile Space)},
        xtick={0}, 
        ytick=\empty,
        clip=false,
        every axis x label/.style={at={(current axis.right of origin)},anchor=north west},
        every axis y label/.style={at={(current axis.above origin)},anchor=south east},
        ylabel style={
        at={(axis description cs:0.15,0.95)},
        anchor=south,
        },
        xlabel style={
            at={(axis description cs:0.9, 0.0)},
            anchor=north
        },
    ]

    \draw[dashed, gray] (-4.2, 1) -- (4.2, 1);

    \draw[dotted, thick, gray] (0, 0) -- (0, 1.15);

    \addplot [green!60!black, thick, smooth, domain=-4:4, samples=100] {f(x)} 
        node[pos=0.72, right, text=green!60!black] {$f(t) \propto F'(t)$};

    \addplot [blue, very thick, smooth, domain=-4:4, samples=100] {F(x)} 
        node[pos=0.85, above left, text=blue] {$F(t)$};

    \node[left] at (-4.2, 0) {$h_0$};
    
    \pgfmathsetmacro{\tOne}{Finv(0.3)}
    \draw[gray, densely dotted, thick] (-4.2, 0.3) node[left, text=black] {$h_1$} -- (\tOne, 0.3) -- (\tOne, 0);
    
    \pgfmathsetmacro{\tTwo}{Finv(0.55)}
    \draw[gray, densely dotted, thick] (-4.2, 0.55) node[left, text=black] {$h_2$} -- (\tTwo, 0.55) -- (\tTwo, 0);
    
    \pgfmathsetmacro{\tThree}{Finv(0.8)}
    \draw[gray, densely dotted, thick] (-4.2, 0.8) node[left, text=black] {$h_3$} -- (\tThree, 0.8) -- (\tThree, 0);

    \node[left] at (-4.2, 1) {$h_B$};

    \pgfmathsetmacro{\qZero}{Finv(0.15)}
    \node[left, text=red, font=\footnotesize] at (-4.2, 0.15) {$m_0$};
    \draw[red, dashed] (-4.2, 0.15) -- (\qZero, 0.15) -- (\qZero, 0);
    \fill[red] (\qZero, 0.15) circle (2pt);
    \fill[red] (\qZero, 0) circle (2pt) node[below=2pt, text=red, font=\bfseries] {$q_0$};

    \pgfmathsetmacro{\qOne}{Finv(0.425)}
    \node[left, text=red, font=\footnotesize] at (-4.2, 0.425) {$m_1$};
    \draw[red, dashed] (-4.2, 0.425) -- (\qOne, 0.425) -- (\qOne, 0);
    \fill[red] (\qOne, 0.425) circle (2pt);
    \fill[red] (\qOne, 0) circle (2pt) node[below=2pt, text=red, font=\bfseries] {$q_1$};

    \pgfmathsetmacro{\qTwo}{Finv(0.675)}
    \node[left, text=red, font=\footnotesize] at (-4.2, 0.675) {$m_2$};
    \draw[red, dashed] (-4.2, 0.675) -- (\qTwo, 0.675) -- (\qTwo, 0);
    \fill[red] (\qTwo, 0.675) circle (2pt);
    \fill[red] (\qTwo, 0) circle (2pt) node[below=2pt, text=red, font=\bfseries] {$q_2$};

    \pgfmathsetmacro{\qThree}{Finv(0.9)}
    \node[left, text=red, font=\footnotesize] at (-4.2, 0.9) {$m_3$};
    \draw[red, dashed] (-4.2, 0.9) -- (\qThree, 0.9) -- (\qThree, 0);
    \fill[red] (\qThree, 0.9) circle (2pt);
    \fill[red] (\qThree, 0) circle (2pt) node[below=2pt, text=red, font=\bfseries] {$q_3$};

    \draw [decorate,decoration={brace,amplitude=6pt,mirror,raise=22pt}]
        (-4.2,0) -- (\tOne,0) node [midway,yshift=-39pt] {$\mathcal{B}_0$};
        
    \draw [decorate,decoration={brace,amplitude=6pt,mirror,raise=22pt}]
        (\tOne,0) -- (\tTwo,0) node [midway,yshift=-39pt] {$\mathcal{B}_1$};
        
    \draw [decorate,decoration={brace,amplitude=6pt,mirror,raise=22pt}]
        (\tTwo,0) -- (\tThree,0) node [midway,yshift=-39pt] {$\mathcal{B}_2$};
        
    \draw [decorate,decoration={brace,amplitude=6pt,mirror,raise=22pt}]
        (\tThree,0) -- (4.2,0) node [midway,yshift=-39pt] {$\mathcal{B}_3$};

    \draw [decorate,decoration={brace,amplitude=6pt,raise=22pt}]
        (-4.2,0.3) -- (-4.2,0.55) node [midway,xshift=-40pt] {$\frac{1}{B}$};

    \draw [decorate,decoration={brace,amplitude=6pt,raise=22pt}]
        (-4.2,0.55) -- (-4.2,0.8) node [midway,xshift=-40pt] {$\frac{1}{B}$};

    \draw [decorate,decoration={brace,amplitude=6pt,raise=22pt}]
        (-4.2,0) -- (-4.2,0.3) node [midway,xshift=-40pt] {$\frac{1 + U}{B}$};
        
    \draw [decorate,decoration={brace,amplitude=6pt,raise=22pt}]
        (-4.2,0.8) -- (-4.2,1) node [midway,xshift=-40pt] {$\frac{1 - U}{B}$};
        
    \end{axis}
    \end{tikzpicture}
    \caption{Illustration of the scalar quantizer mapping with $B=4$. The function $F(t)$ (blue) and its scaled underlying density $f(t)$ (green) are shown. A uniform grid shifted by $U$ defines boundaries $h_j$ on the y-axis, which project back to the boundaries of the quantization buckets $\mathcal{B}_j$. The reconstruction values $q_j$ are defined as the inverse image of the bucket midpoints $m_j = (h_j+h_{j+1})/2$.}
    \label{fig:scalar_quantizer}
\end{figure}

Our vector quantization algorithm \textsc{VectorQuant}, presented in~\cref{alg:hsgb}, applies the scalar dithered quantization scheme described above to every coordinate of the input vector $x\in \mathcal{S}^{d-1}$ after applying the $HD$ matrix.

For the purposes of the analysis it is convenient to think of the quantizer as returning the corresponding middle quantile as opposed to its index. To this effect, it is convenient to introduce
\begin{definition}[$\quant$ function]\label{def:quant}
 For $z\in \mathbb R$ define $\quant(z)$ to return the middle quantile $q_j$ of the bucket $\mathcal{B}_j$ that contains $z$, i.e. $\quant(z) := q_j$ for $z \in \mathcal{B}_j$.
\end{definition}
With Definition~\ref{def:quant} in place the task of bounding quantization error in all coordinates becomes, letting $z=HDx$, the task of bounding
\[
\E_{\eps, U} \sum_{i\in [d]}(z_i-\quant(z_i))^2=d\cdot \E_{\eps, U} (z_1-\quant(z_1))^2,
\]
where the last equality uses the fact that the marginal distributions of coordinates of $z$ are the same, i.e. they are all distributed as the Rademacher sum $\sum_{j\in [d]} \eps_j x_j$.

The details of the algorithms are depicted in \cref{alg:hsgb}. Recall $[B]$ always denotes $\{0, 1, \ldots, B-1\}$.

\begin{algorithm}
\small
\caption{Dithered scalar Gaussian quantization}
\label{alg:hsgb}
\begin{algorithmic}[1]
\State \textbf{input:} dimension $d$ and bit-width $b$
\State Let $H\in\R^{d\times d}$ be a normalized Hadamard matrix  and let $B\gets 2^b$.
\State Sample independent Rademacher random variables $\eps_1,\ldots,\eps_d$ and set $D\gets\diag(\eps_1,\ldots,\eps_d)$.
\State Sample an offset $U\sim\Unif[0,1)$.

\Statex \hrulefill

\Procedure{ConstructScalarCodebook}{$B, U$}
    \State Define grid points $h_j\gets (j+U)/B$ for $j \in [B]$.\Comment{Uniformly shifted grid in $[0, 1]$}
\Statex \hspace{2.6em}Define  $h_0\gets0$ and $h_B\gets1$. 
\State Define buckets $\mathcal B_j\gets\{t\in\R: h_j\le F(t)<h_{j+1}\}$  for $j \in [B]$.
\State Define reconstruction values $q_j\gets F^{-1}\bigl((h_j+h_{j+1})/2\bigr)$ for $j \in [B]$.
\EndProcedure

\Statex \hrulefill

\Procedure{VectorQuant}{$x\in \calS^{d-1}$}
    \State $y\gets HDx$
    \For{$i\in[d]$}
    \State $z_i\gets\sqrt d\,y_i$ 
    \State $\idx_i\gets $ index $j\in [B]$ such that $z_i\in \mathcal{B}_j$ (i.e., the index of $z_i$'s bucket)
    \EndFor
    \State \Return $\idx = (\idx_1,\ldots,\idx_d)\in [B]^d$ 
\EndProcedure

\Statex \hrulefill

\Procedure{VectorDeQuant}{$\idx$}
    \For{$i\in[d]$}
    \State $\widetilde y_i\gets (1/\sqrt d) \cdot q_{\idx_i}$  \Comment{$q_{\idx_i}$ is the middle quantile of bucket $\mathcal B_{\idx_i}$, see~\cref{eq:q-def}}
    
    \EndFor
    \State $\widetilde x\gets D H^\top\widetilde y$
    \State \Return $\widetilde x$
\EndProcedure
\end{algorithmic}
\end{algorithm}

The main guarantee of the \Cref{alg:hsgb} is the following \cref{thm:vector-main-biased}.

\begin{theorem}
\label{thm:vector-main-biased}
For every dimension $d$ and every unit vector $x\in \calS^{d-1}$, \cref{alg:hsgb} satisfies
\begin{equation*}
    \E_{D,U}\|x-\widetilde x\|_2^2
    \le \frac{\pi\sqrt3/2+o(1)}{4^b},
    \qquad b\to\infty,
\end{equation*}
where the $o(1)$ term vanishes as $b \to \infty$, uniformly over $d$ and $x$.
\end{theorem}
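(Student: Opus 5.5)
The plan is to reduce the vector error to a one-dimensional dithered-quantization error, average over the dither $U$ exactly, and then average over the Rademacher sum using a sub-Gaussian moment-generating-function bound.

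\textbf{Step 1: reduction to one coordinate.} Since $HD$ is orthogonal and $\widetilde x = DH^\top \widetilde y$, we have $\|x-\widetilde x\|_2^2=\|y-\widetilde y\|_2^2=\frac1d\sum_i (z_i-\quant(z_i))^2$. By the remark after \cref{def:quant}, this gives $\E\|x-\widetilde x\|^2=\E_{\eps,U}(z-\quant(z))^2$, where $z=\sum_j \eps_j x_j$ is a Rademacher sum with $\E z^2=1$. It therefore suffices to show $\E_{\eps,U}(z-\quant(z))^2\le (\pi\sqrt3/2+o(1))B^{-2}$ uniformly over unit $x$.

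\textbf{Step 2: heuristic constant.} Fix $z$ and put $w=F(z)$. Because the grid is shifted by a uniform $U$, the offset $w-m$ of $w$ from the midpoint $m$ of its bucket is exactly $\Unif[-\frac1{2B},\frac1{2B}]$ for every interior bucket. Writing $f=F'$ for the $N(0,3)$ density (\cref{clm:F-distribution}), we get $z-\quant(z)=F^{-1}(w)-F^{-1}(m)\approx (w-m)/f(z)$. Hence
\[
\E_U(z-\quant(z))^2\approx \frac{1}{12B^2 f(z)^2}=\frac{6\pi\, e^{z^2/3}}{12B^2}.
\]
Averaging over $\eps$, the whole bound reduces to $\E e^{z^2/3}\le\sqrt3$. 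This follows from the Gaussian linearization $e^{z^2/3}=\E_g e^{\sqrt{2/3}\,g z}$ with $g\sim N(0,1)$ independent of $\eps$: first $\E_\eps e^{\sqrt{2/3}gz}=\prod_j\cosh(\sqrt{2/3}\,g x_j)\le e^{g^2/3}$ since $\sum_j x_j^2=1$, and then $\E_g e^{g^2/3}=(1-2/3)^{-1/2}=\sqrt3$. This yields exactly $\frac{6\pi\sqrt3}{12B^2}=\frac{\pi\sqrt3}{2\cdot 4^b}$, with no dependence on $d$.

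\textbf{Step 3: rigorous version (the main obstacle).} The approximation in Step 2 must be justified uniformly, with the nonlinearity of $F^{-1}$ and the unbounded edge buckets (the first is $[0,(1+U)/B)$ in quantile space, the last is $[(B-1+U)/B,1]$) contributing only $o(B^{-2})$. Pick $K=K(B)\to\infty$ slowly, say $K=\log B$, and split according to whether $F(z)\in[K/B,1-K/B]$.

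\emph{Bulk.} On the bulk, the bucket containing $w$ lies entirely in $[(K-1)/B,\,1-(K-1)/B]$. Write $\psi(u)=1/f(F^{-1}(u))$. Using log-concavity of the Gaussian, or the explicit tail asymptotics $\psi(u)\asymp 1/\bigl(u\sqrt{\log(1/u)}\bigr)$ near $0$, I would show that $\sup\psi/\inf\psi\le 1+O(1/K)$ over any interval of length $2/B$ inside this range. Combining this with $|F^{-1}(w)-F^{-1}(m)|\le |w-m|\sup\psi$ and the exact uniform law of $w-m$ gives
\[
\E_U(z-\quant(z))^2\le \frac{1+O(1/K)}{12B^2 f(z)^2},
\]
and Step 2 then finishes the bulk.

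\emph{Tails.} Here $|z|\ge\sqrt3\,\Phi^{-1}(1-K/B)\approx\sqrt{6\log(B/K)}$. I would bound the error crudely by $2z^2+2q^2$. The relevant edge reconstruction values satisfy $|q|\le\sqrt3\,\Phi^{-1}(1-\frac1{2B})=O(\sqrt{\log B})$ in the outermost bucket, and $|q|\le|z|+O(1)$ for the other tail buckets. Hoeffding gives $\Prb(|z|>t)\le 2e^{-t^2/2}$, so the tail contribution is $O\bigl(\log B\cdot (K/B)^{3}\,\mathrm{polylog}\bigr)=o(B^{-2})$, uniformly in $d$ and $x$.

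Adding the bulk and tail contributions gives the claim, with the $o(1)$ term of order $O(1/K)+o(1)$, which vanishes as $b\to\infty$.
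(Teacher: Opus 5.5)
Your plan is essentially the paper's proof. It uses the same reduction to a single Rademacher coordinate, the same fact that the dither makes $F(z)-m$ exactly uniform for interior buckets, and the same linearization of $F^{-1}$. It also uses the same identity $e^{z^2/3}=\E_g e^{\sqrt{2/3}\,gz}$ to get $\E e^{z^2/3}\le\sqrt3$, and the same crude bound $2z^2+2\quant(z)^2$ on the tail. Cutting in quantile space at $K/B$ instead of at $|z|=\sqrt{5\log B}$ is a harmless variation; your bulk reaches $|z|\approx\sqrt{6\log(B/K)}$. Your ratio bound for $\psi$ does hold. The cleanest route is $|(\log\psi)'(u)|=|F^{-1}(u)|/\bigl(3f(F^{-1}(u))\bigr)\le 1/\min(u,1-u)$ by the Mills ratio inequality, or equivalently concavity of $f\circ F^{-1}$. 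The two-sided asymptotic $\psi\asymp 1/(u\sqrt{\log(1/u)})$ alone would only give a constant ratio, not $1+O(1/K)$.

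One step is false: the deterministic $O(\sqrt{\log B})$ bound on the outermost reconstruction value.

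\begin{itemize}
\item It holds for the bottom edge bucket $[0,(1+U)/B)$, whose width is at least $1/B$.
\item The top edge bucket $[(B-1+U)/B,1)$ has width $(1-U)/B$. Its midpoint is $1-\frac{1-U}{2B}$, so $q_{B-1}=\sqrt3\,\Phi^{-1}\bigl(1-\frac{1-U}{2B}\bigr)\ge\sqrt3\,\Phi^{-1}\bigl(1-\frac1{2B}\bigr)$.
\item As $U\to1$ this bucket becomes arbitrarily thin and $q_{B-1}\to\infty$. The shifted grid is not symmetric, so no bound on $|q_{B-1}|$ uniform in $U$ exists.
\end{itemize}

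The repair is to average over $U$ first, as the paper does in \cref{lem:global-envelope}.

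\begin{itemize}
\item Since the $q_j$ are increasing, $|\quant(t)|\le\max(|q_0|,|q_{B-1}|)$ for every $t$.
\item Using $\Phi^{-1}(1-p)^2\le 2\log(1/p)$, you get $\E_U q_{B-1}^2\le 6\int_0^1\log\frac{2B}{1-U}\dd U=O(\log B)$. Hence $\E_U\quant(t)^2=O(\log B)$ uniformly in $t$.
\item The tail event depends only on $\eps$, and $U$ is independent of $\eps$. So the tail contribution is at most $2\E[z^2\mathbf 1_{\text{tail}}]+O(\log B)\,\Prb(\text{tail})=o(B^{-2})$.
\end{itemize}

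With this change your argument goes through, and the case split into edge and interior tail buckets is no longer needed.
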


To achieve the additional unbiasedness property, \cref{alg:hsgb-unbiased} makes two changes
to the scalar quantizer. First, instead of pinning the grid endpoints $h_0, h_B$ at $0$
and $1$, it uses a shifted grid directly on the quantile levels $F(t)$, so
that every input $t$ has a uniformly random position within a bucket. Second,
instead of reconstructing each bucket at the corresponding middle quantile by applying $F^{-1}$ to the midpoint, it
uses a modified reconstruction map $G$, carefully chosen so that averaging $G$ 
over
one bucket recovers the middle quantile value. This makes the scalar
reconstruction unbiased, and hence the vector estimator unbiased, while $G$
remains close enough to $F^{-1}$ on the central region to preserve the same
 MSE bound. For details, see \cref{alg:hsgb-unbiased} in \cref{subsec:unbiased}.


\section{Proof sketch of \cref{thm:vector-main-biased}}
In this section, we outline our proof ideas for  \cref{thm:vector-main-biased}.
We can use orthogonality of $HD$ to decompose the error coordinate-wise:
\begin{gather*}
    \|x-\widetilde x\|_2^2
    =
    \|x- DH^{\top} \widetilde y\|_2^2
    =
    \|HDx-\widetilde y\|_2^2
    = \\
    \sum_{i=1}^d \bigl (y_i - \frac{1}{\sqrt{d}}\quant(\sqrt{d} y_i) \bigr)^2
    =
    \frac{1}{d}\sum_{i=1}^d \bigl (z_i - \quant(z_i) \bigr)^2.
\end{gather*}
Therefore, to control the  total MSE: $\E_{D,U}\|x-\widetilde x\|_2^2$,  it is enough to bound the variance of the scalar quantizer $\quant(\cdot)$ under the specific input distibution of $z_i$. For every coordinate $i$, this distribution is given by
\[
    z_i = \sqrt{d}(HDx)_i = \sqrt{d}\sum_{j} H_{ij}D_{jj}x_j = \sum_{j} \eps_j x_j
\]
where $\eps_j$ are i.i.d. Rademacher random variables taking values $\pm 1$ with probability $1/2$. In contrast to using a truly random rotation or an i.i.d. Gaussian matrix, where $z_i$ distributed as $\mathcal{N}(0, \|x\|^2_2)$, our case is more delicate, as distribution of $\sum_{j} \eps_j x_j$ does not depend only on $\|x\|_2$,  but also on individual coordinates. It means that our oblivious scalar  quantizer $\quant$ should provide guarantees for \textit{any} input distribution of the form $z_i = \sum_{j} \eps_j x_j$, where $\|x\|_2 = 1$. We therefore drop index $i$ in what follows. Recall that $B = 2^b$ is the number of buckets. Our main guarantee for $\quant$ is the following.

\begin{lemma}[Restated \cref{lem:scalar-main}]
\label{lem:quant-main}
For every dimension $d$, every unit vector $x\in \calS^{d-1}$ define $z = \sum_{j} \eps_j x_j$ where i.i.d. $\eps_j = \pm 1$ with probability $1/2$. Then
\[
    \E_{\eps, U} |z - \quant(z)|^2
    \le \frac{\pi\sqrt3/2+o(1)}{B^2},
    \qquad B\to\infty,
\]
where the $o(1)$ term is uniform in $d$ and $x$.
\end{lemma}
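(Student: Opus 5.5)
The plan is to exploit the dither $U$ so that, conditionally on $z$, the error depends only on where $F(z)$ falls inside a uniformly shifted grid. Fix $z=t$. The quantile $s=F(t)$ lies in a bucket $[h_j,h_{j+1})$, and because $U$ is uniform its position inside that bucket is uniform. Away from the two boundary buckets the bucket has length exactly $1/B$. The reconstruction is $q_j=F^{-1}(m_j)$ with $|s-m_j|\le 1/(2B)$. A Taylor expansion of $F^{-1}$ then gives
\[
t-q_j \approx \frac{s-m_j}{F'(t)},
\qquad
\E_U (t-q_j)^2 \approx \frac{1}{12B^2F'(t)^2}.
\]
Writing $F'=\varphi^{1/3}/A$, the conditional error becomes $\frac{A^2}{12B^2}\varphi(t)^{-2/3}$. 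Averaging over $z$ yields the main term $\frac{A^2}{12B^2}\E[\varphi(z)^{-2/3}]$. Since $\varphi(t)^{-2/3}=(2\pi)^{1/3}e^{t^2/3}$, everything reduces to bounding $\E e^{z^2/3}$ for a Rademacher sum.

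\textbf{Step 1: the moment bound.} I claim $\E e^{\lambda z^2}\le \E e^{\lambda g^2}=(1-2\lambda)^{-1/2}$ for $g\sim\mathcal N(0,1)$ and $\lambda<1/2$. The argument: write $e^{\lambda z^2}=\E_g e^{\sqrt{2\lambda}gz}$, take expectation over $\eps$, and use $\cosh(a)\le e^{a^2/2}$ to get $\E_g e^{\lambda g^2\|x\|^2}$. With $\lambda=1/3$ this gives $\E e^{z^2/3}\le\sqrt3$. Combined with $A^2=(2\pi)^{1/3}\cdot 3\cdot 2\pi\cdot(2\pi)^{-1/3}\cdot$ (computed from $\int\varphi^{1/3}=(2\pi)^{-1/6}\sqrt{6\pi}$, so $A^2=(2\pi)^{-1/3}6\pi$), the main term is
\[
\frac{6\pi}{12}\cdot\sqrt3\cdot\frac{1}{B^2}=\frac{\pi\sqrt3}{2B^2}.
\]
This matches the target constant, and the Gaussian case is the extremal one. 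This also explains the choice of the $\varphi^{1/3}$ companding.

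\textbf{Step 2: making the Taylor approximation rigorous, uniformly in $x$.} I would split into a central region $|t|\le T_B$, where $T_B\to\infty$ slowly (say $T_B=\sqrt{c\log B}$ with small $c$), and a tail region.

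In the central region the bucket widths in $t$-space are at most $\frac{1}{B\min F'}$, which is $\lesssim e^{T_B^2/6}/B=o(1)$. So $F'$ varies by a factor $1+o(1)$ across any bucket meeting the region, using the log-Lipschitz bound $|(\log F')'|=|t|/3$. The exact identity is
\[
(t-q_j)^2=\Bigl(\int_{m_j}^{s}(F^{-1})'\Bigr)^2 .
\]
This gives conditional error $(1+o(1))\frac{A^2}{12B^2}\varphi(t)^{-2/3}$. Edge buckets inside the central region only occur when $F(t)<2/B$ or $F(t)>1-2/B$, which is impossible for $|t|\le T_B$ once $B$ is large.

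\textbf{Step 3: the tail, which I expect to be the main obstacle.} A crude bound suffices here. If $t$ lies in a bucket $[h_j,h_{j+1})$, then $|t-q_j|$ is at most the $t$-width of that bucket plus $|t|$. For the outermost buckets, $q_0=F^{-1}(h_1/2)$ and $|q_0|\lesssim\sqrt{\log B}$, so the error is $O(t^2+\log B)$.

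For inner buckets I would bound the width by $\int_{s-1/B}^{s+1/B}(F^{-1})'$. This is $\lesssim e^{t^2/6}/B$ as long as $1/B\ll F'(t)$, and otherwise at most $|t|+O(\sqrt{\log B})$.

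The key point is that I need the tail contribution to be $o(1/B^2)$ uniformly in $x$. I would get this from the sub-Gaussian tail $\Prb(|z|>u)\le 2e^{-u^2/2}$ (Hoeffding) together with the moment bound of Step 1 applied with $\lambda$ slightly above $1/3$, e.g. $\lambda=0.4$. Concretely, $\E[e^{z^2/3}\mathbf 1_{|z|>T_B}]\le e^{-0.067T_B^2}\E e^{0.4z^2}\to0$. Similarly, $\E[(z^2+\log B)\mathbf 1_{F(z)\notin[1/B^{1/2},\,1-1/B^{1/2}]}]$ is polynomially small in $B$ and beats $1/B^2$ after a suitable choice of thresholds.

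Careful bookkeeping of these thresholds, so that every piece is $o(B^{-2})$ independently of $d$ and $x$, is where I expect the real work to lie.
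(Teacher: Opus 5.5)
Your plan follows the paper's proof. Dithering makes the position of $F(t)$ inside its bucket uniform. Linearizing $F^{-1}$ on a central window gives $(1+o(1))/(12B^2F'(t)^2)$. The Gaussian-MGF trick gives $\E e^{z^2/3}\le\sqrt3$ and hence the constant $A^3/12=\pi\sqrt3/2$. Outside the window, a crude $O(t^2+\log B)$ bound is killed by the Hoeffding tail. Two pieces of the bookkeeping need fixing.

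\textbf{The far-region threshold.} The explicit threshold you wrote fails. The condition $F(z)<B^{-1/2}$ means $|z|\gtrsim\sqrt{3\log B}$. There Hoeffding only gives $\Prb(|z|>\sqrt{3\log B})\le 2B^{-3/2}$, so your crude bound contributes $O(B^{-3/2}\log B)$, which is not $o(B^{-2})$. The cutoff $M$ must satisfy two competing constraints:
\begin{itemize}
\item $M^2\ge(4+\gamma)\log B$, so that the tail beats $B^{-2}$;
\item $M^2\le(6-\gamma)\log B$, so that the bucket-width and linearization estimate still holds, i.e.\ $F'(M)\gg 1/B$.
\end{itemize}
This is exactly the tension the paper resolves by taking $M_B=\sqrt{5\log B}$. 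With this $M$, your Step 2 already gives the sharp factor $1+o(1)$ on all of $|t|\le M$. Indeed, the $t$-width of a bucket there is $B^{-1/6+o(1)}$ and $|t|=O(\sqrt{\log B})$, so $F'$ changes only by a factor $1+o(1)$ across it. The paper therefore needs only two regions. Your intermediate region, handled via $\E e^{0.4z^2}\le\sqrt5$, is valid but redundant.

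\textbf{The top bucket.} The edge buckets are not symmetric. The bottom bucket has $F$-length $(1+U)/B\ge 1/B$, so $|q_0|\lesssim\sqrt{\log B}$ holds for every $U$. The top bucket has length $(1-U)/B$, which can be arbitrarily small. Consequently $q_{B-1}=F^{-1}\bigl(1-\tfrac{1-U}{2B}\bigr)$ is not $O(\sqrt{\log B})$ uniformly in $U$. You can fix this in either of two ways:
\begin{itemize}
\item Average over $U$, using $\E_U\log\frac{2B}{1-U}=O(\log B)$, as in \cref{lem:global-envelope}.
\item Observe that $t\in\mathcal B_{B-1}$ forces $1-U\ge B(1-F(t))$. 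This gives $q_{B-1}\le F^{-1}\bigl(\tfrac{1+F(t)}{2}\bigr)=t+O(1)$, so the error is $O(t^2+1)$ deterministically.
\end{itemize}
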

As we described above, \Cref{thm:vector-main} follows directly from \Cref{lem:quant-main}. The rest of this section is dedicated to \Cref{lem:quant-main}.

\Cref{lem:quant-main} is a bit surprising - the codebook of $\quant$ is built from the optimal quantization nodes for $\mathcal{N}(0, 1)$, but we prove that the same error rate is achieved for a different distribution: the Rademacher sums $z = \sum_{j} \eps_j x_j$. This distribution can be quite different from $\mathcal{N}(0, \|x\|^2_2)$ for worst-case $x$. For example if $x = (1, 0, \dots 0)$ then $z$ only takes values $\pm 1$. However, it turns out that to prove \cref{lem:quant-main}, we only need the fact that $z$ is 1-subgaussian. This is captured by the following claim.
\begin{lemma}
    \label{lem:subgauss}
    For every dimension $d$, every unit vector $x\in \calS^{d-1}$ define $z = \sum_{j} \eps_j x_j$ where i.i.d. $\eps_j = \pm 1$ with probability $1/2$. Then $z$ is a $1$-subgaussian random variable, i.e.
    \[
        \E e^{\lambda z}\le e^{\lambda^2/2}\qquad\text{for all }\lambda\in\R.
    \]
\end{lemma}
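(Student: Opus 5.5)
The plan is the standard Hoeffding-type argument. It rests on two facts: the moment generating function factorizes over independent summands, and each Rademacher factor is dominated by a Gaussian one.

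First, since the $\eps_j$ are independent, for any $\lambda\in\R$ we can write
\[
    \E e^{\lambda z} = \E e^{\lambda\sum_j \eps_j x_j} = \prod_{j} \E e^{\lambda x_j \eps_j} = \prod_j \cosh(\lambda x_j).
\]
This reduces the claim to a one-dimensional inequality, namely $\cosh(s)\le e^{s^2/2}$ for all real $s$.

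Second, we prove this scalar inequality by comparing Taylor series termwise:
\[
    \cosh(s)=\sum_{k\ge0}\frac{s^{2k}}{(2k)!}, \qquad e^{s^2/2}=\sum_{k\ge 0}\frac{s^{2k}}{2^k k!}.
\]
Both series have only nonnegative terms, so it suffices to show $(2k)!\ge 2^k k!$ for every $k\ge0$. This holds because $(2k)!/k! = \prod_{i=1}^{k}(k+i)$, and each of these $k$ factors is at least $2$. (For $k=0$ both sides equal $1$.)

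Finally, we combine the two steps:
\[
    \E e^{\lambda z}\le \prod_j e^{\lambda^2x_j^2/2} = e^{\lambda^2\|x\|_2^2/2} = e^{\lambda^2/2},
\]
where the last equality uses $x\in\calS^{d-1}$. There is no real obstacle here. The only step requiring any care is the elementary factorial comparison, and the bound holds uniformly in $d$ and in $x$, as required for its later use in \cref{lem:quant-main}.
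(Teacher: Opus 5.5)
Your proof is correct and follows the paper's argument (given in the proof of \cref{lem:rademacher-inputs}): factor the MGF over the independent signs, bound each factor $\cosh(\lambda x_j)$ by $e^{\lambda^2 x_j^2/2}$, and use $\|x\|_2=1$. The only difference is that you justify $\cosh s\le e^{s^2/2}$ through the termwise comparison $(2k)!\ge 2^k k!$, which the paper simply takes as known.
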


Note that subgaussianity is a powerful property that in some cases allows us to simply substitute the distribution with the standard Gaussian. More formally, we will use the following (see \Cref{lem:rademacher-inputs}):
\begin{claim}
    \label{claim:subgauss}
    If $\alpha > 0$, $z$ is $1$-subgaussian and $\xi \sim \mathcal{N}(0, 1)$, then $\E e^{\alpha z^2} \le \E e^{\alpha \xi^2}$.
\end{claim}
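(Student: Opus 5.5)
The plan is to prove the claim by Gaussian linearization of the square, which turns the exponential-of-a-square moment into an average of ordinary moment generating functions. For any real $t$ we have the identity
\[
    e^{\alpha t^2} = \E_{g}\, e^{\sqrt{2\alpha}\, g\, t}, \qquad g\sim\mathcal N(0,1),
\]
because $\E e^{s g} = e^{s^2/2}$ and we may take $s=\sqrt{2\alpha}\,t$. We apply this with $t=z$, where $g$ is drawn independently of $z$.

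\textbf{Step 1: exchange the expectations.} We get
\[
    \E_z e^{\alpha z^2} = \E_z \E_g e^{\sqrt{2\alpha}\, g z} = \E_g \E_z e^{\sqrt{2\alpha}\, g z}.
\]
The integrand is nonnegative, so Tonelli's theorem justifies the exchange even if the value is $+\infty$.

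\textbf{Step 2: apply subgaussianity.} For each fixed $g$, take $\lambda=\sqrt{2\alpha}\,g$ in the $1$-subgaussian condition. This gives $\E_z e^{\sqrt{2\alpha} g z}\le e^{\alpha g^2}$. Therefore
\[
    \E e^{\alpha z^2}\le \E_g e^{\alpha g^2} = \E e^{\alpha\xi^2}.
\]

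\textbf{Step 3: finiteness.} When $\alpha<1/2$ the right-hand side equals $(1-2\alpha)^{-1/2}<\infty$. When $\alpha\ge 1/2$ the right-hand side is $+\infty$ and the inequality holds trivially.

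Nothing here is a real obstacle; the only point to check is the Tonelli exchange, which nonnegativity handles. Since the claim is stated for a general $1$-subgaussian $z$, the argument does not use the Rademacher structure at all. Combined with \cref{lem:subgauss}, it applies directly to $z=\sum_j\eps_j x_j$.
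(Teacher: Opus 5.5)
Your proof is correct and is essentially the paper's argument. In the proof of \cref{lem:rademacher-inputs} the paper writes $e^{X_a^2/3}=\E_G e^{\sqrt{2/3}\,G X_a}$ with an independent standard Gaussian $G$, swaps the expectations and applies the MGF bound, which is your Steps 1--2 with $\alpha=1/3$; your version for general $\alpha>0$ and general $1$-subgaussian $z$, with the Tonelli justification and the trivial case $\alpha\ge 1/2$ made explicit, is fine.
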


Unfortunately, there is no straightforward way to apply this property to bound the expectation of the quantization error because the error function $\min_{q_j} |t - q_j|^2$ is highly non-monotone -- it could be that subgaussian distribution tends to concentrate around highs of the error function.

We now move on to outline our proof for \Cref{lem:quant-main}. In fact, we will prove a bit more general version of \Cref{lem:quant-main} with any $1$-subgaussian input.\footnote{Note that $\mathcal{N}(0, 1)$ appearing from a truly random rotation is also $1$-subgaussian.} From now on, we will write $\E_z$ instead of $\E_\eps$.

We separate the analysis of $\E_{z, U} |z - \quant(z)|^2$ according to whether $z$ lies in a ``central interval'' $[-M, M]$, i.e. we separately consider a \textit{tail} event $|z| > M$ and a \textit{central} event $|z| \leq M$, for some carefully chosen threshold $M = C\sqrt{\log B}, C \in (4, 6)$. Such a narrow choice of values of $C$ is due to the following tension: we need to guarantee that a $O(\frac{1}{B})$ shift in quantiles of $F$ (see \Cref{alg:hsgb}) results in a $o(1)$ shift in centroids in the central interval; this forces $M$ not to be too large. But with the same choice of $M$ we need the tail event to be unlikely enough that we can ignore the centroids construction outside of the central interval completely; this forces $M$ to be not too small.

\subsection{Central event}
In the \textit{central} event, we argue that buckets are narrow enough to treat standard Gaussian density on them as uniform. Then, to recover optimal error rate for our arbitrary $1$-subgaussian distribution, we induce uniformity by \textit{dithering}, i.e. applying random shift on the quantization grid. More specifically, as we show in \Cref{lem:central-linearization}, for our choice of $M$, any $|t| < M$ and quantile shift $\frac{U}{B} \sim \Unif[0; 1/B)$ we have the following
\begin{itemize}
    \item $T := \Bigl[F(t) - F(\quant(t))\Bigr]$ is uniform on $[-\frac{1}{2B}; \frac{1}{2B}]$.
    \item $\frac{F'(s)}{F'(t)} = 1 + o_B(1)$ for all $s \in [t; \quant(t)]$ where $o_B(1)$ term is uniform in $s$ and $U$.
\end{itemize}
Then we can use first-order approximation and write
\[
    \quant(t) - t = F^{-1}\left(F(t)+\frac{T}{B}\right) - t = (1+o_B(1))\frac{T/B}{F'(t)}.
\]
And, integrating over $T$, we bound
\[
    \E_U |t - \quant(t)|^2
    \le
    (1+o_B(1))\frac{1}{12B^2}\frac{1}{F'(t)^2}.
\]
Note that the error $\frac{T/B}{F'(t)}$, $T \sim \Unif[-\frac{1}{2}, \frac{1}{2}]$ matches the typical error in a bucket of width $\frac{1}{B F'(t)}$ with a centroid placed in the middle. Since $F'(s)$ is almost constant over the bucket, this calculation differs from an actual error only by lower order terms.

To get the total error bound we first take expectation over $U$
\[
    \E_{z,U} |z - \quant(z)|^2
    \le
    (1+o_B(1))\frac{1}{12B^2}\E_z\frac{1}{F'(z)^2}
\]
and then, by \Cref{claim:subgauss}, since $F'(x)$ is a Gaussian density and $z$ is $1$-subgaussian,
\[
    \E_z\frac{1}{F'(z)^2} \leq \E_\xi\frac{1}{F'(\xi)^2}, \quad\text{where } \xi \sim \mathcal{N}(0, 1).
\]
Finally,
\[
    \E_{z,U} |z - \quant(z)|^2
    \le
    (1+o_B(1))\frac{1}{12B^2}\E_\xi\frac{1}{F'(\xi)^2} = \frac{\pi\sqrt3/2+o(1)}{B^2}
\]

\subsection{Tail event}
In the \textit{tail} event, we give a simple bound
\[
    |z - \quant(z)|^2 \leq 2|z|^2 + 2|\quant(z)|^2
\]
and argue that both terms on the RHS are bounded in expectation by $O(\poly\log B \cdot e^{-M^2/2}) = o(\frac{1}{B^2})$ using subgaussianity of $z$ and our choice of $M$. A little more work is required to carefully bound $|\quant(z)|$ since the centroid construction is randomized. For more details, see \Cref{lem:global-envelope}.

\section{Inner product quantizer}\label{sec:inner-product}
In this section we present our inner product quantizer and prove Theorem~\ref{thm:inner-product-error}. For a vector $y\in \R^d$, in order to achieve approximation error for $(\langle y, x\rangle-\langle y, \widetilde x\rangle)^2$ that scales as $\frac{O(1)}{d B^2}\|y\|_2^2$, we augment our quantizer for $x$ with another basic quantizer that decorrelates the residual $r=x-\widetilde x$ with $y$. This is achieved by a single application of a fresh $HD$ matrix to the residual followed by a simple quantization procedure that uses a constant number of bits per coordinate in expectation. To obtain an explicit bound on the norm of the residual $\|r\|_2$, $\widetilde x$ is projected onto the unit ball; such a projection $\Pi: \R^d \rightarrow \R^d$ can be implemented simply as $\Pi(v) =v$ if $\|v\|_2\leq 1$ and $\Pi(v) = v/\|v\|_2$ otherwise.

Concretely, let  $r=x-\widetilde x$ be the residual vector, where $\widetilde x$ is the result of quantization using our main \textsc{VectorQuant} procedure followed by projecting onto the unit ball. 
Define the {\em residual
scale} by  $s =\frac{\norm{r}}{\sqrt d} \in [0, \frac{2}{\sqrt{d}}]$.
 We first quantize the scale using a basic scalar quantizer $\textsc{ScalarQuant}(s)$ (Algorithm~\ref{alg:scalar_quantization} below) and then quantize the residual's coordinates based on the quantized scale (Algorithm~\ref{alg:residual_quantizer} below).

\begin{algorithm}[H]
\small
\caption{Scalar quantization}
\label{alg:scalar_quantization}
\begin{algorithmic}[1]
\State \textbf{input:} scalar $s\in [0, \frac{2}{\sqrt{d}}]$, dimension $d$, scale parameter $B\ge2$
\State \textbf{public parameter:} minimum encoded scale $\tau_B\gets 1/(dB)$.

\Statex \hrulefill

\Procedure{ScalarQuant}{$s$}
    \If{$s<\tau_B$}
        \State \Return $\idx_\sigma\gets0$
    \EndIf
    \State $k\gets \lceil \log_2(s/\tau_B)\rceil$
    \State $\idx_\sigma\gets k+1$
    \State \Return $\idx_\sigma$ 
\EndProcedure

\Statex \hrulefill

\Procedure{ScalarDeQuant}{$\idx_\sigma$}
    \If{$\idx_\sigma=0$}
        \State \Return $\sigma\gets0$
    \EndIf
    \State \Return $\sigma\gets \tau_B2^{\idx_\sigma-1}$
\EndProcedure
\end{algorithmic}
\end{algorithm}

The intuition behind \cref{alg:scalar_quantization}  is that if the residual scale exceeds a minimum cutoff, i.e., $s\ge\tau_B$, then the quantized scale is
$\sigma = \tau_B2^{\idx_\sigma-1}$ satisfying $s\le\sigma<2s$.  On the other hand if $s<\tau_B$, then the quantized scale is set to 
$\sigma=0$ indicating that the residual is negligibly small (thus the residual quantization stage can be skipped). The quantized scales are indexed by $\idx_\sigma$,  which can be encoded using
$\lceil\log_2(\lceil \log_2 (\frac{2}{\sqrt{d}}/\tau_B) \rceil +2)\rceil = O(\log\log(dB))$
bits when $s\in [0, \frac{2}{\sqrt{d}}]$.

We next describe the residual quantizer in \cref{alg:residual_quantizer}, which uses  \cref{alg:scalar_quantization} as a subroutine.

\begin{algorithm}[H]
\small
\caption{Residual Quantization}
\label{alg:residual_quantizer}
\begin{algorithmic}[1]
\State \textbf{input:} dimension $d$, scale parameter $B\ge2$
\State Let $H\in\R^{d\times d}$ be a normalized Hadamard matrix and $D$ be a diagonal matrix whose entries are i.i.d. Rademacher random variables.

\Statex \hrulefill

\Procedure{ResidualQuant}{$r\in \R^d$}
    \State $\idx_\sigma  \gets \Call{ScalarQuant}{\norm{r}/\sqrt d}$

    \If{$\idx_\sigma  = 0$}
                \State \Return $\idx_\sigma$, $\ell=(0,\ldots,0)$, and $\lambda =(0,\ldots,0)$
    \EndIf
    \State $\sigma\gets\Call{ScalarDeQuant}{\idx_\sigma}$
    \State $v \gets HDr$
    \For{$i\in [d]$}
        \State $\ell_i \gets \min\{\ell\in\{0,1,2,\ldots\}: |v_i|\le \sigma \cdot 2^\ell\}$ \Comment{ $\ell_i$ can be stored, e.g., as unary word $1^{\ell_i}0$}
        \State $R_i \gets \sigma\cdot 2^{\ell_i}$
        \State Sample a sign bit $\lambda_i\in\{-1,1\}$ such that $\Prb(\lambda_i=1\mid D,r) = \frac{1+(v_i/R_i)}{2}$
    \EndFor
    \State \Return $\idx_\sigma$, $\ell=(\ell_1,\ldots,\ell_d)$, and $\lambda=(\lambda_1,\ldots,\lambda_d)$
\EndProcedure

\Statex \hrulefill

\Procedure{ResidualDeQuant}{$\idx_\sigma, \ell, \lambda$}
    \State $\sigma\gets\Call{ScalarDeQuant}{\idx_\sigma}$
    \For{$i\in [d]$}
        \State $R_i \gets \sigma \cdot 2^{\ell_i}$
        \State $q_i \gets R_i \lambda_i$
    \EndFor
    \State $\widehat r \gets D H^\top q$
    \State \Return $\widehat r$
\EndProcedure
\end{algorithmic}
\end{algorithm}

In English, for each coordinate $i \in [d]$, \cref{alg:residual_quantizer} records the level $\ell_i$, for instance using the
unary word $1^{\ell_i}0$, and sample one randomized sign bit
$\lambda_i\in\{-1,1\}$ with
$\Prb(\lambda_i=1\mid D,r)
  =
  \frac{1+(v_i/R_i)}{2}.$
This probability is valid because $|v_i|\le \sigma 2^{\ell_i} = R_i$. 

Finally, our two-stage inner product quantizer is present below as Algorithm~\ref{alg:inner_product_quantizer}. Its analysis, namely the proof of Theorem~\ref{thm:inner-product-error}, is presented in the appendix (\cref{app:inner-product}).
\begin{algorithm}[H]
\small
\caption{Two-Stage Inner Product Quantizer}
\label{alg:inner_product_quantizer}
\begin{algorithmic}[1]
\State \textbf{input:} dimension $d$, base bit-width $b$
\State Let $H\in\R^{d\times d}$ be a normalized Hadamard matrix, let $B\gets 2^b$, and let $\Pi: \R^d \to \R^d$ be a projection to the unit ball.
\State Sample independent Rademacher random variables $\eps_1, \ldots, \eps_d$ and set $D_{\mathrm{base}}\gets\diag(\eps_1,\ldots,\eps_d)$.
\State Sample independent Rademacher random variables $\eta_1, \ldots, \eta_d$ and set $D_{\mathrm{res}}\gets\diag(\eta_1,\ldots,\eta_d)$.
\State Sample an offset $U\sim\Unif[0,1)$.
\State \Call{ConstructScalarCodebook}{$B, U$} \Comment{Defined in \cref{alg:hsgb}. This constructs $\mathcal{B}_j$ and $q_j$ for $j \in [B]$}

\Statex \hrulefill

\Procedure{Quantize}{$x\in \calS^{d-1}$}
    \State \Comment{\textbf{Stage 1: Base Dithered Quantization}}
    \State $y\gets H D_{\mathrm{base}} x$
    \For{$i\in[d]$}
        \State $z_i\gets\sqrt d\,y_i$
        \State $\idx_i\gets $ index $j\in [B]$ such that $z_i\in \mathcal{B}_j$
        \State $\widetilde y_i\gets (1/\sqrt d) \cdot q_{\idx_i}$ \Comment{Local dequantization for residual}
    \EndFor
    \State $\idx \gets (\idx_1,\ldots,\idx_d)\in [B]^d$
    \State $\widetilde x\gets \Pi D_{\mathrm{base}} H^\top\widetilde y$ \Comment{Base reconstructed vector}

    \State \Comment{\textbf{Stage 2: Residual Quantization}, repeated from \cref{alg:residual_quantizer}}
    \State $r \gets x - \widetilde x$ \Comment{Compute the residual vector}
        \State $\idx_\sigma \gets \Call{ScalarQuant}{\norm{r}/\sqrt d}$
    \If{$\idx_\sigma = 0$}
        \State \Return $\idx$, $\idx_\sigma$, $\ell=(0,\ldots,0)$, $\lambda=(0,\ldots,0)$
    \EndIf
    \State $\sigma\gets\Call{ScalarDeQuant}{\idx_\sigma}$
    \State $v \gets H D_{\mathrm{res}} r$
    \For{$i\in [d]$}
        \State $\ell_i \gets \min\{\ell\in\{0,1,2,\ldots\}: |v_i|\le \sigma 2^\ell\}$ \Comment{ $\ell_i$ can be stored, e.g., as unary word $1^{\ell_i}0$}
        \State $R_i \gets \sigma \cdot 2^{\ell_i}$
        \State Sample sign bit $\lambda_i\in\{-1,1\}$ such that $\Prb(\lambda_i=1\mid D_{\mathrm{res}},r) = \frac{1+v_i/R_i}{2}$
    \EndFor
    \State \Return $\idx$, $\idx_\sigma$, $\ell=(\ell_1,\ldots,\ell_d)$, and $\lambda=(\lambda_1,\ldots,\lambda_d)$
\EndProcedure

\Statex \hrulefill

\Procedure{DeQuantize}{$\idx, \idx_\sigma, \ell, \lambda$}
    \State \Comment{\textbf{Stage 1: Base Reconstruction}}
    \For{$i\in[d]$}
        \State $\widetilde y_i\gets (1/\sqrt d) \cdot q_{\idx_i}$
    \EndFor
    \State $\widetilde x\gets \Pi D_{\mathrm{base}} H^\top\widetilde y$

    \State \Comment{\textbf{Stage 2: Residual Reconstruction}, repeated from \cref{alg:residual_quantizer}}
    \State $\sigma\gets\Call{ScalarDeQuant}{\idx_\sigma}$
    \If{$\sigma = 0$}
        \State $\widehat r \gets 0$
    \Else
        \For{$i\in [d]$}
            \State $R_i \gets \sigma 2^{\ell_i}$
            \State $q^{\mathrm{res}}_i \gets R_i \lambda_i$
        \EndFor
        \State $\widehat r \gets D_{\mathrm{res}} H^\top q^{\mathrm{res}}$
    \EndIf

    \State \Return $\widehat x \gets \widetilde x + \widehat r$ \Comment{Final reconstructed vector}
\EndProcedure
\end{algorithmic}
\end{algorithm}

\section*{Limitations}
This paper is primarily theoretical and studies a specific class of data-oblivious
quantizers based on a single randomized Hadamard transform and randomized dithering.
The results characterize worst-case mean-squared error and inner-product error, but do
not attempt to optimize all finite-bit constants or evaluate performance empirically
across downstream systems. Extending
the analysis to broader structured transforms, alternative scalar quantizers, and more
implementation-specific bit accounting would be interesting directions for future work.

\newpage
\bibliographystyle{alpha}
\bibliography{references.bib}

\newpage
\appendix

\section{Full proof of Theorem~\ref{thm:vector-main}}
\subsection{Scalar Estimate}\label{subsec:scalar-estimate}
In this section, we prove \cref{lem:scalar-main}, the scalar estimate that controls the coordinate-wise mean squared error. In the next section, we then use it to bound the error of vector quantization.

For any unit vector $a\in\R^d$, we define a random variable
\[
    X_a:=\sum_{i=1}^d a_i\eps_i,
\]
where $\eps_1, \ldots, \eps_d$ are independent Rademacher random variables. We abbreviate $\eps = (\eps_1, \ldots, \eps_d)$.  For the scalar quantizer appearing in \Cref{alg:hsgb}, we write
\[
    \quant(t):=q_j \quad\text{when }t \in \mathcal B_j,
\]
i.e. $\quant(\cdot)$ maps any scalar $t \in \R$ to the corresponding reconstruction value, which is the preimage of the midpoint of the bucket in which $F(t)$ lands. We recall that $\quant$ depends on $U, B$ in the following ways: $U$ is a random offset that determines the shift of buckets, and $B = 2^b$ is the number of buckets.  The rest of this section devotes to proving \cref{lem:scalar-main}, which bounds the mean squared quantization error for $\quant$ applied to $X_a$.

\begin{lemma}
\label{lem:scalar-main}
Uniformly over all $d\ge1$ and all $a\in\R^d$ satisfying $\sum_i a_i^2=1$,
\[
    \E_{\eps,U}\bigl(X_a-\quant(X_a)\bigr)^2
    \le \frac{\pi\sqrt3/2+o(1)}{B^2},
    \qquad B\to\infty.
\]
\end{lemma}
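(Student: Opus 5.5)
We split according to whether $X_a$ lies in the central interval $[-M,M]$, with $M=C\sqrt{\log B}$ for a fixed $C\in(4,6)$:
\[
\E_{\eps,U}(X_a-\quant(X_a))^2=\E\bigl[(X_a-\quant(X_a))^2\mathbf 1_{|X_a|\le M}\bigr]+\E\bigl[(X_a-\quant(X_a))^2\mathbf 1_{|X_a|>M}\bigr].
\]
Throughout we use only that $X_a$ is $1$-subgaussian (\cref{lem:subgauss}), which holds uniformly in $d$ and $a$. This uniformity is what makes the $o(1)$ independent of $d$ and $a$.

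\textbf{Central part.} Fix $t$ with $|t|\le M$ and condition on it; only $U$ is random. Since $F'(t)=\varphi(t)^{1/3}/A\gtrsim e^{-t^2/6}\ge B^{-C^2/6}$, we get $F(t)\ge B^{-C^2/6}/\poly\log B\gg 1/B$ when $C<6$, and likewise $1-F(t)\gg 1/B$. So the bucket containing $t$ is an interior bucket, of length exactly $1/B$ in quantile space. Because the grid is shifted by $U\sim\Unif[0,1)$, the offset $T:=F(t)-(h_j+h_{j+1})/2$ is exactly uniform on $[-\tfrac1{2B},\tfrac1{2B}]$.

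Write $\quant(t)-t=F^{-1}(F(t)-T)-t$. By the mean value theorem this equals $-T/F'(s)$ for some $s$ between $t$ and $\quant(t)$.

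The key estimate is $F'(s)/F'(t)=1+o(1)$ uniformly. Here $|s-t|\le |T|/\min F'\lesssim B^{-1}\cdot B^{C^2/6}\poly\log B$ on a slightly enlarged interval. Then $\log F'(s)-\log F'(t)=O(M|s-t|)$, which is $o(1)$ provided $C^2/6<1$. This looks like it forces $C<\sqrt6$, not $C<6$, so the constants must be rechecked.

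The right bookkeeping is to track the ratio $\varphi(t)^{1/3}$, i.e. $e^{-t^2/6}$. For $C<\sqrt6$ we have $B^{-1}e^{M^2/6}=B^{C^2/6-1}\to0$, which is what we need. The tail part (below) then requires $e^{-M^2/2}=B^{-C^2/2}=o(B^{-2})$, i.e. $C>2$. So I would take $C\in(2,\sqrt6)$; the paper's normalization of $C$ presumably differs, but the mechanism is the same.

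With the ratio estimate in hand,
\[
\E_U(\quant(t)-t)^2\le(1+o(1))\frac{1}{12B^2F'(t)^2}.
\]
Integrating against $X_a$ gives
\[
\E_X\,F'(X)^{-2}=A^2\,\E\,\varphi(X)^{-2/3}=A^2(2\pi)^{1/3}\,\E e^{X^2/3}.
\]
By \cref{claim:subgauss} with $\alpha=1/3$, this is at most the same expression with $\xi\sim\mathcal N(0,1)$. In the Gaussian case, $A=(2\pi)^{-1/6}\sqrt{2\pi\cdot3}$ and $\E e^{\xi^2/3}=\sqrt3$. Hence $\frac{1}{12}A^2(2\pi)^{1/3}\sqrt3=\frac{1}{12}\cdot 6\pi\sqrt3=\frac{\pi\sqrt3}{2}$, which is the claimed constant.

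\textbf{Tail part.} Use $(X-\quant(X))^2\le 2X^2+2\quant(X)^2$. The first term is handled by subgaussianity:
\[
\E X^2\mathbf 1_{|X|>M}\lesssim M^2e^{-M^2/2}=o(B^{-2}).
\]
For the second term we need a deterministic envelope on the reconstruction values. Every $q_j$ equals $F^{-1}$ evaluated at a point in $[\tfrac1{2B},1-\tfrac1{2B}]$, since midpoints are at least half a minimal bucket from the ends. Because $F$ is the CDF of $\mathcal N(0,3)$ (\cref{clm:F-distribution}), this gives $|q_j|\le\sqrt{6\log(2B)}$, so $|\quant(X)|^2=O(\log B)$ on all buckets. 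Then
\[
\E\quant(X)^2\mathbf 1_{|X|>M}\le O(\log B)\,\Prb(|X|>M)\le O(\log B)\,e^{-M^2/2}=o(B^{-2}).
\]

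\textbf{Main obstacle.} The delicate step is the uniform linearization in the central region, since $\quant(t)$ may lie outside $[-M,M]$ and we must control $F'$ on the whole segment between $t$ and $\quant(t)$. I would handle this by first showing that for $|t|\le M$ the entire bucket lies within $[-M-1,M+1]$, and then applying the ratio bound there.
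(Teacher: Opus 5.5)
Your central part follows the paper's proof: the same central/tail split, exact uniformity of the dithered offset on interior buckets, linearization of $F^{-1}$ on $[-M-1,M+1]$, the bound $\E e^{X^2/3}\le\sqrt3$, and the crude tail bound $2X^2+2\quant(X)^2$. Your recalibration of the threshold is also correct. The appendix uses $M_B=\sqrt{5\log B}$, so the sketch's ``$C\in(4,6)$'' is really a condition on $C^2$, consistent with your range $C\in(2,\sqrt6)$. One minor slip: $1/\min_{[-M-1,M+1]}F'$ is $B^{C^2/6+o(1)}$, not $B^{C^2/6}\poly\log B$, since $e^{(M+1)^2/6}=e^{M^2/6}e^{O(M)}$. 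This is harmless because $C^2<6$.

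The step that fails is the deterministic envelope on the reconstruction values in the tail. It is not true that every bucket midpoint lies in $[\tfrac1{2B},1-\tfrac1{2B}]$.
\begin{itemize}
\item The bottom bucket $[0,\tfrac{1+U}{B})$ does have length at least $1/B$.
\item The top bucket $[\tfrac{B-1+U}{B},1)$ has length only $\tfrac{1-U}{B}$, which tends to $0$ as $U\to1$.
\end{itemize}
Hence $q_{B-1}=F^{-1}\bigl(1-\tfrac{1-U}{2B}\bigr)$ is unbounded in $U$, and no bound $|\quant(X)|^2=O(\log B)$ holds uniformly over the dither. This is exactly the point the paper flags (``the centroid construction is randomized'').

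The paper handles it in \cref{lem:global-envelope} by averaging over $U$ rather than bounding deterministically:
\begin{itemize}
\item Bound $|\quant(t)|\le R_B(U):=\max\bigl\{|F^{-1}(\tfrac{1+U}{2B})|,\,|F^{-1}(1-\tfrac{1-U}{2B})|\bigr\}$.
\item Use $F^{-1}(p)^2\le 6\log\bigl(1/\min\{p,1-p\}\bigr)$.
\item Integrate with $\int_0^1\log\tfrac{2B}{1-U}\,\dd U=\log(2B)+1$, which gives $\E_U R_B(U)^2=O(\log B)$.
\end{itemize}
Since $U$ is independent of $\eps$,
\[
\E\bigl[\quant(X)^2\mathbf 1_{\{|X|>M\}}\bigr]\le \E_U\bigl[R_B(U)^2\bigr]\,\Prb(|X|>M)=O(\log B)\,e^{-M^2/2}.
\]
With this repair, your conclusion that the tail contributes $o(B^{-2})$ goes through unchanged.
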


As outlined in the proof sketch, the proof proceeds by separating the behavior of the scalar quantizer according to whether
the input lies in a ``central'' interval $[-M_B,M_B]$, where $M_B:=\sqrt{5\log B}$.  We first
show in \cref{lem:rademacher-inputs} that every normalized Rademacher sum has
Gaussian-type tail and moment bounds.  Inside the central interval, \cref{lem:central-linearization}
shows that the random shift $U$ makes the displacement in the $F$-coordinate uniform at scale $1/B$, and that $F^{-1}$ can be replaced by its first-order approximation on this scale.
Outside the central interval, \cref{lem:global-envelope} gives a rough bound on the squared
quantization error; the tail bound from \cref{lem:rademacher-inputs} then makes this
rough estimate negligible.  Combining these two regimes gives \cref{lem:scalar-main}.

The following lemma is standard (see e.g.,~\cite{ailon2009fast}, proof of Lemma 1), but we reprove it here for completeness.

\begin{lemma}
\label{lem:rademacher-inputs}
For every unit vector $a\in\R^d$, the random variable $X_a$ satisfies
\[
    \E e^{\lambda X_a}\le e^{\lambda^2/2}\qquad\text{for all }\lambda\in\R.
\]
Consequently,
\[
    \Prb(|X_a|>s)\le 2e^{-s^2/2}\qquad\text{for all } s>0.
\]
Moreover, if $G\sim \N(0,1)$, then
\[
    \E e^{X_a^2/3}\le \E e^{G^2/3}=\sqrt3.
\]
\end{lemma}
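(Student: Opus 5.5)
The plan has three parts, proved in the order stated in \cref{lem:rademacher-inputs}: the moment generating function bound, then the tail bound, then the bound on $\E e^{X_a^2/3}$.

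\textbf{MGF bound.} By independence,
\[
\E e^{\lambda X_a}=\prod_i \cosh(\lambda a_i).
\]
We use the elementary inequality $\cosh u\le e^{u^2/2}$. It follows by comparing Taylor series termwise, since $(2k)!\ge 2^k k!$. This gives
\[
\E e^{\lambda X_a}\le e^{\lambda^2\sum_i a_i^2/2}=e^{\lambda^2/2}.
\]

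\textbf{Tail bound.} This is the standard Chernoff argument. Markov's inequality gives $\Prb(X_a>s)\le e^{-\lambda s+\lambda^2/2}$, and choosing $\lambda=s$ yields $e^{-s^2/2}$. The same bound holds for $-X_a=X_{-a}$, and a union bound over the two signs gives the factor $2$.

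\textbf{Bound on $\E e^{X_a^2/3}$.} Here the trick is Gaussian linearization. Let $G'\sim\N(0,1)$ be independent of $\eps$. For any real $t$ and $\alpha>0$,
\[
e^{\alpha t^2}=\E_{G'}e^{\sqrt{2\alpha}\,tG'}.
\]
Taking $\alpha=1/3$ and using Fubini (everything is nonnegative),
\[
\E e^{X_a^2/3}=\E_{G'}\E_\eps e^{\sqrt{2/3}\,G' X_a}\le \E_{G'}e^{G'^2/3}.
\]
The inequality applies the MGF bound conditionally on $G'$, with $\lambda=\sqrt{2/3}\,G'$. The right-hand side equals $\E e^{G^2/3}$. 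Finally, since $2/3<1$, computing the Gaussian integral gives
\[
\E e^{G^2/3}=\frac{1}{\sqrt{2\pi}}\int e^{-t^2/2+t^2/3}\dd t=\frac{1}{\sqrt{2\pi}}\int e^{-t^2/6}\dd t=\sqrt{3}.
\]
The same argument proves \cref{claim:subgauss} for any $\alpha<1/2$ and any $1$-subgaussian $z$.

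\textbf{Main obstacle.} The only non-routine step is this third part. A direct approach via the tail bound and integration by parts gives a worse constant, so the Gaussian linearization identity is the step that matters. Its remaining requirements are minor: checking the integrability condition $\alpha<1/2$, and justifying the exchange of expectations by Tonelli.
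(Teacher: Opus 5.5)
Your proposal is correct and follows essentially the same route as the paper: the product-of-$\cosh$ MGF bound, a two-sided Chernoff tail bound, and the Gaussian linearization identity $e^{X_a^2/3}=\E_G e^{\sqrt{2/3}\,G X_a}$ combined with Fubini and the MGF bound to obtain $\E e^{G^2/3}=\sqrt3$. You also spell out the justification of $\cosh u\le e^{u^2/2}$ and the final Gaussian integral, which the paper leaves implicit.
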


\begin{proof}
By the definition of $X_a$ and the independence between $\eps_i$'s, 
\[
\E e^{\lambda X_a}= \E e^{\lambda \sum_{i=1}^da_i \eps_i} = \prod_{i=1}^d \E e^{\lambda a_i \eps_i}.
\]
For each Rademacher random variable $\eps_i$,
\[\E e^{\lambda a_i \eps_i} = \frac{1}{2}e^{\lambda a_i} + \frac{1}{2}e^{-\lambda a_i} = \cosh{(\lambda a_i)} \leq e^{(\lambda a_i)^2/2}.\]
Thus
\begin{equation}\label{eq:MGF}
    \E e^{\lambda X_a} \leq \prod_{i=1}^d  e^{(\lambda a_i)^2/2} = e^{\sum_{i=1}^d a_i^2 \lambda^2/2} = e^{\lambda^2/2}.
\end{equation}
 The tail bound follows from Markov's inequality applied to both $e^{\lambda X_a}$ and $e^{-\lambda X_a}$ and optimize over $\lambda > 0$.

For the last argument, since $\E_{G\sim \N(0, 1)} e^{\tau G} = e^{\tau^2/2}$  for any $\tau \in \R$, conditioning on $X_a$ and taking $\tau = \sqrt{2/3} X_a$ gives 
\[
    e^{X_a^2/3}=\E_G e^{\sqrt{\frac23}G X_a}.
\]
Let $G$ be independent of $\eps_1, \ldots \eps_d$, 
we obtain
\[
    \E e^{X_a^2/3}
    =\E_G\E_\eps e^{\sqrt{\frac23}G X_a}
    \le \E_G e^{G^2/3}
    =\sqrt3.
\]
where the inequality follows from the earlier MGF bound \cref{eq:MGF}.
\end{proof}

We next state our main lemma for the central event. It controls the quantization error for all real $t$ (in particular, our random variable $X_a$) that lies in the central interval $[-M_B, M_B]$.

\begin{lemma}
\label{lem:central-linearization}
Let $M_B:=\sqrt{5\log B}$.  For all $|t|\le M_B$,
\[
    \E_U\bigl(t-\quant(t)\bigr)^2 \le (1+o_B(1)) \frac{A^2}{12B^2}(2\pi)^{1/3}e^{t^2/3}.
\]
where the $o_B(1)$ term is uniform for  $|t|\le M_B$.
\end{lemma}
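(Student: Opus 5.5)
Fix $t$ with $|t|\le M_B$ and set $u:=F(t)$. The plan is to do the computation entirely in the quantile coordinate and then transport it back through $F^{-1}$ by a first-order expansion.

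\textbf{Step 1 (where $u$ sits).} Since $F$ is the $\N(0,3)$ CDF, Gaussian tail bounds give $\min(u,1-u)\ge c\,B^{-5/6}/\sqrt{\log B}$ for $|t|\le M_B$, because $e^{-M_B^2/6}=B^{-5/6}$. In particular $u$ is much farther than $1/B$ from $0$ and $1$. Hence, for every $U$, the bucket containing $u$ is an interior one $[h_j,h_{j+1})$ with $1\le j\le B-2$. Interior buckets have width exactly $1/B$, so the pinned end buckets of lengths $(1\pm U)/B$ never matter.

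\textbf{Step 2 (dithering makes the offset uniform).} Since $U\sim\Unif[0,1)$, the position $B u-U$ modulo $1$ is uniform. Therefore the offset $T:=m_j-u$, where $m_j=(h_j+h_{j+1})/2$ is the bucket midpoint, is exactly uniform on $[-\frac1{2B},\frac1{2B})$. We have $\quant(t)=F^{-1}(u+T)$.

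\textbf{Step 3 (linearization).} By the mean value theorem, $\quant(t)-t=T/F'(s)$ for some $s$ between $t$ and $\quant(t)$. We need $F'(s)/F'(t)=1+o(1)$ uniformly. Write $F'(s)=\frac1A\varphi(s)^{1/3}=c\,e^{-s^2/6}$. Then
\[
\log\frac{F'(t)}{F'(s)}=\frac{s^2-t^2}{6}=\frac{(s-t)(s+t)}{6},
\]
so it suffices to show $|s-t|\cdot M_B=o(1)$.

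To bound $|s-t|$, use $|\quant(t)-t|\le \frac{1}{2B}\max_{w}\frac1{F'(w)}$, with the maximum taken over the interval $F^{-1}([u-\frac1{2B},u+\frac1{2B}])$. By Step 1, this quantile interval stays at relative scale $O(B^{-1/6}\sqrt{\log B})$ from $u$, since $1/B\ll\min(u,1-u)$. By the Mills ratio, the corresponding $t$-interval has length $O(1)$, and on it $1/F'$ is at most a constant multiple of $1/F'(t)\lesssim e^{t^2/6}\le B^{5/6}$.

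This gives $|s-t|\lesssim B^{-1/6}$, hence $|s-t|\,M_B\lesssim B^{-1/6}\sqrt{\log B}=o(1)$, which is the required ratio bound. This step is the main obstacle: it is exactly where the constant $5<6$ in $M_B$ is used, and a cruder bound would lose it. I would organize it cleanly by first bounding $|F^{-1}(u\pm\frac1{2B})-t|$ through the inequality $\frac1{2B}\ge F'(\cdot)\times\text{length}$ applied to the monotone density on the side away from $0$. If that proves awkward, I would bootstrap instead: first show the length is $O(1)$, then refine using the constant-factor comparability of $F'$ on that interval.

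\textbf{Step 4 (integration).} Combining the above,
\[
\E_U(t-\quant(t))^2=(1+o(1))\frac{\E T^2}{F'(t)^2}=(1+o(1))\frac{1}{12B^2}\cdot\frac{A^2}{\varphi(t)^{2/3}},
\]
and $\varphi(t)^{-2/3}=(2\pi)^{1/3}e^{t^2/3}$, which is the claimed bound. All $o(1)$ terms depend only on $B$, since they were bounded using $|t|\le M_B$, so they are uniform in $t$ and in $U$.
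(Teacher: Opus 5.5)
Your proposal is correct and follows essentially the same route as the paper. Both arguments rule out the pinned end buckets via the Gaussian tail bound, use the dither to make the quantile offset exactly uniform, linearize $F^{-1}$ by the mean value theorem with $|\quant(t)-t|\le B^{-1/6+o(1)}$ (which is where $M_B^2=5\log B<6\log B$ enters), and finish with $\E T^2=1/(12B^2)$.

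One small slip: a preimage window of length $O(1)$ does not by itself give constant-factor comparability of $F'$, since the ratio can be $e^{\Theta(M_B)}$. This is harmless. You can accept the $B^{o(1)}$ loss as the paper does, or use the $O(B^{-1/6}\sqrt{\log B})$ window length that your Mills-ratio argument actually yields.
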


Before proving \cref{lem:central-linearization}, we state the following fact about $F$, which will be helpful for our proof.  Let $\Phi$  denote the CDF of the standard normal distribution.

\begin{claim}\label{clm:F-distribution}
    For any $t \in \R$, $F(t) = \Phi(t/\sqrt{3})$.
\end{claim}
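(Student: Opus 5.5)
The plan is to compute $\varphi^{1/3}$ explicitly and recognize it as a multiple of a Gaussian density with variance $3$. First I write
\[
\varphi(s)^{1/3}=(2\pi)^{-1/6}e^{-s^2/6}.
\]
Next I compare this with the $\N(0,3)$ density $\psi(s):=(6\pi)^{-1/2}e^{-s^2/6}$. This gives $\varphi(s)^{1/3}=c\,\psi(s)$ with the explicit constant $c=(2\pi)^{-1/6}(6\pi)^{1/2}$.

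Integrating over $\R$ gives $A=\int_\R\varphi^{1/3}=c\int_\R\psi=c$. In particular $A$ is finite and positive, so $F$ is well defined. For later use I also record the explicit value
\[
A=(2\pi)^{-1/6}(6\pi)^{1/2}=\sqrt3\,(2\pi)^{1/3}.
\]

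Then, for any $t\in\R$,
\[
F(t)=\frac1A\int_{-\infty}^t c\,\psi(s)\dd s=\int_{-\infty}^t\psi(s)\dd s.
\]
The last integral is the CDF of $\N(0,3)$ at $t$. Substituting $s=\sqrt3\,u$ turns it into $\int_{-\infty}^{t/\sqrt3}\varphi(u)\dd u=\Phi(t/\sqrt3)$, which is the claim.

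There is no real obstacle here. The only care needed is in the bookkeeping of constants: the normalization $A$ cancels exactly, so the exact value of $c$ only matters when it is used later, in computing $\frac{A^2}{12}(2\pi)^{1/3}\,\E e^{G^2/3}=\pi\sqrt3/2$.
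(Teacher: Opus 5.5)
Your proposal is correct and follows essentially the same route as the paper: recognize $\varphi(s)^{1/3}=(2\pi)^{-1/6}e^{-s^2/6}$ as proportional to the $\N(0,3)$ density, so that $F$ is the $\N(0,3)$ CDF, then rescale to get $\Phi(t/\sqrt3)$. Your explicit value $A=\sqrt3\,(2\pi)^{1/3}$ also matches the constant the paper uses later to obtain $\pi\sqrt3/2$.
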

\begin{proof}
    Recall that 
\[
    F(t):=\frac1A\int_{-\infty}^t \varphi(s)^{1/3}\dd s.
\]
where $A=\int_\R \varphi(t)^{1/3}\dd t$ is the normalization factor. Since 
\[
\varphi(s)^{1/3} = (2\pi)^{-1/6}e^{-s^2/6}
\]
is proportional to the density of $\N(0, 3)$, $F(t)$ is exactly the CDF of $\N(0,3)$. Thus for $G \sim \N(0, 1)$, 
\[
F(t) = \Prb[\sqrt{3}G \leq t] =  \Prb[G \leq t/\sqrt{3}] = \Phi(t/\sqrt{3}).
\]
\end{proof}

\begin{corollary}\label{cor:F-bound}
    Let $M_B= \sqrt{5 \log B}$. We have $F(-M_B)$, $1-F(M_B)$, $F(M_B+1)-F(M_B)$, and $F(-M_B)-F(-M_B-1) \gg B^{-1}$.
\end{corollary}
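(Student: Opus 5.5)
We need to show that each of the four quantities is much larger than $B^{-1}$, i.e.\ $B\cdot(\text{quantity})\to\infty$ as $B\to\infty$. The plan is to use \cref{clm:F-distribution} to write everything in terms of the standard normal CDF. This gives
\[
F(-M_B)=1-F(M_B)=1-\Phi\bigl(M_B/\sqrt3\bigr).
\]
By the symmetry of $\varphi$, the last two quantities coincide as well, both equal to
\[
\Phi\bigl((M_B+1)/\sqrt3\bigr)-\Phi\bigl(M_B/\sqrt3\bigr).
\]
So it suffices to lower bound two quantities.

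\textbf{Gaussian tail.} We use the standard lower bound $1-\Phi(s)\ge \frac{s}{1+s^2}\varphi(s)$ for $s>0$. With $s=M_B/\sqrt3$ we have
\[
\varphi(s)=(2\pi)^{-1/2}e^{-M_B^2/6}=(2\pi)^{-1/2}B^{-5/6}.
\]
Hence $1-F(M_B)\gtrsim B^{-5/6}/\sqrt{\log B}$. Multiplying by $B$ gives $B^{1/6}/\sqrt{\log B}\to\infty$, as required.

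\textbf{Increment.} We lower bound the integral of the density over an interval of length $1/\sqrt3$ by its length times the minimum of the density on it. The density is decreasing on the positive axis, so the minimum is attained at the right endpoint $(M_B+1)/\sqrt3$. This gives
\[
\Phi\bigl((M_B+1)/\sqrt3\bigr)-\Phi\bigl(M_B/\sqrt3\bigr)\ \ge\ \frac{1}{\sqrt3}\,\varphi\bigl((M_B+1)/\sqrt3\bigr)\ \gtrsim\ e^{-(M_B^2+2M_B+1)/6}=B^{-5/6}e^{-(2M_B+1)/6}.
\]
Since $e^{-M_B/3}=e^{-\sqrt{5\log B}/3}=B^{-o(1)}$, the quantity is $B^{-5/6-o(1)}$. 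Multiplying by $B$ again tends to infinity.

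The only point needing care is the increment bound: the extra $e^{-\Theta(\sqrt{\log B})}$ factor from shifting by $1$ must be subpolynomial, so that it does not consume the polynomial slack $B^{1/6}$. This is exactly why the choice $M_B^2=5\log B$, with $5/6<1$, works.
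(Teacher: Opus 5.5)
Your proof is correct and follows essentially the same route as the paper. You reduce to $\Phi$ via \cref{clm:F-distribution} and symmetry, get $B^{-5/6+o(1)}$ for the two tails, and lower bound each increment by the interval length times the density at the right endpoint, giving $e^{-(M_B+1)^2/6}=B^{-5/6-o(1)}$. The only difference is that you make the tail step explicit with the Mills-ratio lower bound $1-\Phi(s)\ge \frac{s}{1+s^2}\varphi(s)$, where the paper simply invokes the Gaussian tail estimate.
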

\begin{proof}
By \cref{clm:F-distribution} and the symmetry of the Gaussian distribution,
\[
F(-M_B)=1-F(M_B)=\Phi(-M_B/\sqrt 3).
\]
Since $M_B^2=5\log B$, the Gaussian tail estimate gives
\[
\Phi(-M_B/\sqrt 3)=B^{-5/6+o(1)}.
\]
which is in particular $\omega(B^{-1})$. For the two increments,
\begin{align*}
F(M_B+1)-F(M_B)
    &= \int_{M_B}^{M_B+1} F'(t)\dd t \\
    &= \int_{M_B}^{M_B+1}  \frac{(2\pi)^{-1/6}}{A}\exp(-t^2/6) \dd t\\
    &\geq \frac{(2\pi)^{-1/6}}{A}\exp(-(M_B+1)^2/6)
\end{align*}
where the second equality follows from the definition of $F$.
    Since $M_B^2=5\log B$ and $A$ is a constant, this is $B^{-5/6-o(1)} = \omega(B^{-1})$ as claimed. By the symmetry, the same  bound holds for $F(-M_B)-F(-M_B-1)$.
\end{proof}

\begin{proof}[Proof of \cref{lem:central-linearization}]
 Recall that when defining the grid points in \cref{alg:hsgb}, we set $h_1 = (1+U)/B$ and $h_{B-1} = (B-1+U)/B$. As $U$ varies in $[0, 1)$, we always have $h_1 \leq 2/B$ and $h_{B-1} \geq 1- (1/B)$.  On the other hand, since $F$ is increasing, for $|t| \leq M_B$, we have $F(-M_B) \leq F(t) \leq F(M_B)$. Applying \cref{cor:F-bound}, we get
 \[
\omega(B^{-1}) \leq F(t) \leq 1- \omega(B^{-1}).
 \]
 So for all sufficiently large $B$, $h_1 < F(t) < h_{B-1}$, i.e. $F(t)$ never falls in one of the two endpoint buckets. 

Therefore the effect of the random offset $U$ is to uniformly choose the relative position of $F(t)$ inside
a bucket. Equivalently, the midpoint of the bucket containing $F(t)$ can be written as a random variable
\[
F(t)+\frac{T}{B},
\]
where $T$ is uniformly distributed on $[-1/2,1/2]$. And its preimage is
\[
\quant(t)=F^{-1}\left(F(t)+\frac{T}{B}\right).
\]
We next estimate $\quant(t)-t$ using the first-order Taylor expansion of $F^{-1}$ at the point $F(t)$. Formally, we claim that, uniformly over $|t|\le M_B$ and $|T/B|\le 1/(2B)$,
\[
\quant(t) - t= F^{-1}\left(F(t)+\frac{T}{B}\right)-t
=
(1+o_B(1))\frac{T/B}{F'(t)}.
\]
To show this, we observe that for $s \in [0, T/B]$, the point
$F^{-1}(F(t)+s)$ lies in the interval $[-M_B-1,M_B+1]$, for all sufficiently large $B$.
Indeed,
$F(M_B+1)-F(M_B)$ and $
F(-M_B)-F(-M_B-1)\gg B^{-1}$ by \cref{cor:F-bound}.
So for $|s|\le |T/B| \le 1/(2B)$, moving by $s$ in the $F$-coordinate cannot move the preimage
past $M_B+1$ or $-M_B-1$.

Now let $y=F^{-1}(F(t)+s)$. The mean value theorem gives
\[
s=F'(\xi)(y-t)
\]
for some $\xi$ between $y$ and $t$. By the previous argument, both endpoints lie in $[-M_B-1,M_B+1]$, and hence so does $\xi$. Therefore,
\[
|y-t|
\le
|s|\sup_{|\xi|\le M_B+1}\frac{1}{F'(\xi)}
\le
\Theta\left(B^{-1}\exp\left(\frac{(M_B+1)^2}{6}\right)\right)
=
B^{-1/6+o(1)}
\]
where the second step follows from $F'(\xi) = \frac{(2\pi)^{-1/6}}{A}e^{-\xi^2/6}$. This implies
\[
|y^2-t^2|
=
|y-t|\,|y+t|
\le
B^{-1/6+o(1)}\cdot O(M_B)
= B^{-1/6+o(1)}\cdot O(\sqrt{\log B}) = 
o_B(1)
\]
thus
\[
\frac{F'(t)}{F'(y)}
= \frac{e^{-t^2/6}}{e^{-y^2/6}} = 
\exp\left(\frac{y^2-t^2}{6}\right)
=
1+o_B(1)
\]
uniformly over $|t|\le M_B$ and $|s|\le 1/(2B)$.
Equivalently,
\[
\frac{1}{F'(F^{-1}(F(t)+s))}
=
(1+o_B(1))\frac{1}{F'(t)}.
\]
Finally, integrating from $0$ to $T/B$ gives
\begin{equation}\label{eq:linear-approx}
    F^{-1}\left(F(t)+\frac{T}{B}\right)-t
=
\int_0^{T/B}
\frac{1}{F'(F^{-1}(F(t)+s))}\dd s
=
(1+o_B(1))\frac{T/B}{F'(t)}.
\end{equation}
Using $\mathbb E T^2=1/12$ and $F'(t) = \frac{(2\pi)^{-1/6}}{A}e^{-t^2/6}$, we obtain
\[
\mathbb E_U (t-\quant(t))^2
\le
(1+o_B(1))\frac{1}{12B^2}\frac{1}{F'(t)^2} = (1+o_B(1)) \frac{A^2}{12B^2}(2\pi)^{1/3}e^{t^2/3}.
\]
\end{proof}

Next we consider the tail event when $|t|=|X_a| > M_B$. In this event, we use the following crude error bound, which suffices for the entire proof due to the small probability of the tail event.

\begin{lemma}
\label{lem:global-envelope}
There is an absolute constant $C$ such that, for every $B\ge 2$ and every $t\in \mathbb R$,
\[
\mathbb E_U (t-\quant(t))^2 \le C(t^2+\log B).
\]
\end{lemma}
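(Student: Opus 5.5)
We use $(t-\quant(t))^2\le 2t^2+2\quant(t)^2$, so it suffices to show $\E_U \quant(t)^2\le C'(t^2+\log B)$. Since $\quant(t)=q_j$ for the (random) bucket $\mathcal B_j$ containing $t$, and $q_j=F^{-1}(m_j)$ with $m_j=(h_j+h_{j+1})/2$, we control $|q_j|$ through how close $m_j$ is to $0$ or $1$. By \cref{clm:F-distribution}, $F^{-1}(p)=\sqrt3\,\Phi^{-1}(p)$, and the standard Gaussian quantile bound gives $|\Phi^{-1}(p)|\le \sqrt{2\log(1/\min(p,1-p))}$ for $\min(p,1-p)\le 1/2$. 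Hence
\[
q_j^2\le 6\log\frac{1}{\min(m_j,1-m_j)}\quad\text{whenever }\min(m_j,1-m_j)\le 1/2,
\]
and $q_j^2$ is bounded by an absolute constant otherwise.

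Next we look at the interior buckets, $1\le j\le B-2$. These have $h_j\ge 1/B$ and $h_{j+1}\le (B-1+U)/B$, so their midpoints satisfy $\min(m_j,1-m_j)\ge 1/(2B)$. Therefore $q_j^2\le 6\log(2B)=O(\log B)$, deterministically in $U$.

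Now take the endpoint buckets $j=0$ and $j=B-1$. For $j=0$, $m_0=h_1/2=(1+U)/(2B)\ge 1/(2B)$, and symmetrically for $j=B-1$, $1-m_{B-1}=(1-U)/(2B)\ge 0$. The upper end is the delicate case, since $1-m_{B-1}$ can be arbitrarily close to $0$ as $U\to1$. Here we use the randomness of $U$ rather than a pointwise bound: $q_{B-1}^2\le 6\log\bigl(2B/(1-U)\bigr)+O(1)$, and
\[
\E_U\log\frac{1}{1-U}=\int_0^1\log\frac1{1-u}\,\dd u=1 .
\]
Bounding $\quant(t)^2$ by the maximum over the three cases (interior, bottom endpoint, top endpoint), none of which depends on $t$ beyond the choice of bucket, we get
\[
\E_U\quant(t)^2\le \E_U\bigl[O(\log B)+6\log\tfrac{1}{1-U}+6\log\tfrac{1}{U+1}\bigr]=O(\log B).
\]
Combining with $2t^2$ gives the lemma with an absolute constant $C$.

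The main obstacle is exactly this endpoint bucket: $q_{B-1}$ is unbounded as $U\to1$, so a deterministic bound fails. The fix is to integrate over $U$, using the integrability of $\log(1/(1-U))$. Note also that we never needed $t$ to lie near $q_j$; the bound is crude but uniform in $t$, which is all the tail-event argument requires.
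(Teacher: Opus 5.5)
Your proof is correct and follows essentially the same route as the paper. You split $(t-\quant(t))^2\le 2t^2+2\quant(t)^2$, bound reconstruction values through $F^{-1}=\sqrt3\,\Phi^{-1}$ and the Gaussian quantile estimate, and absorb the unbounded top endpoint bucket by integrating $\log\frac{1}{1-U}$ over $U$. The only cosmetic differences are these: the paper reduces to the two endpoint values $q_0,q_{B-1}$ via monotonicity of $F^{-1}$, whereas you bound the interior midpoints directly by $\min(m_j,1-m_j)\ge 1/(2B)$; and your $6\log\frac{1}{U+1}$ term is nonpositive, so it can simply be dropped.
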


\begin{proof}
Fix the value of $U$. Since $F^{-1}$ is increasing, the reconstruction values
$q_0,\ldots,q_{B-1}$ are increasing. Hence every reconstruction value lies between
the two endpoint reconstruction values, and therefore
\[
|\quant(t)|\le R_B(U),
\]
where
\[
R_B(U):=
\max\left\{
\left|F^{-1}\left(\frac{1+U}{2B}\right)\right|,
\left|F^{-1}\left(1-\frac{1-U}{2B}\right)\right|
\right\}.
\]
By \cref{clm:F-distribution}, we have $F^{-1}(p)=\sqrt 3\,\Phi^{-1}(p)$. Using the
standard Gaussian quantile bound
\[
|\Phi^{-1}(r)|^2+|\Phi^{-1}(1-r)|^2\le C\log(1/r),
\qquad 0<r\le 1/2,
\]
we obtain
\[
R_B(U)^2
\le
C\log\left(\frac{2B}{1+U}\right)
+
C\log\left(\frac{2B}{1-U}\right).
\]
Taking expectation over $U$,
\[
\mathbb E_U R_B(U)^2
\le
C\int_0^1 \log(2B)\,\dd U
+
C\int_0^1 \log\left(\frac{2B}{1-U}\right)\dd U
\le C\log B.
\]
Thus
\[
\mathbb E_U (t-\quant(t))^2
\le
2t^2+2\mathbb E_U q(t)^2
\le
2t^2+2\mathbb E_U R_B(U)^2
\le
C(t^2+\log B).
\]
\end{proof}

With \cref{lem:rademacher-inputs}, \cref{lem:central-linearization}, and \cref{lem:global-envelope}, we can now prove the scalar guarantee.

\begin{proof}[Proof of \cref{lem:scalar-main}]
Fix $a\in \mathbb R^d$ with $\sum_i a_i^2=1$, and write $X=X_a$. We split the
expectation according to the central event $|X|\le M_B$.

We first consider the central contribution, i.e. when $|X| \le M_B$. By \cref{lem:central-linearization}, and using the
uniformity of the $o_B(1)$ term, there exists a deterministic sequence $\alpha_B\to 0$ such that
\begin{align*}
    B^2\mathbb E_{\varepsilon,U}\left[(X-\quant(X))^2\mathbf 1_{\{|X|\le M_B\}}\right]
&\le
(1+\alpha_B)\frac{A^2}{12}(2\pi)^{1/3}
\mathbb E_{\varepsilon} e^{X^2/3} \\
&\le (1+\alpha_B)\frac{A^2}{12}(2\pi)^{1/3}\sqrt 3 \\
&= (1+\alpha_B)\frac{A^3}{12}
\end{align*}
where the second inequality follows from
$\mathbb E_{\varepsilon} e^{X^2/3}\le \sqrt 3$ as shown in \cref{lem:rademacher-inputs}. Since $A=(2\pi)^{1/3}\sqrt 3$, this is $(1+\alpha_B)\frac{\pi\sqrt 3}{2}$. 
It remains to show that the complement of the central event contributes $o(B^{-2})$. We use \cref{lem:global-envelope}:
\[
B^2\mathbb E_{\varepsilon,U}\left[(X-\quant(X))^2\mathbf 1_{\{|X|>M_B\}}\right]
\le
CB^2\mathbb E_{\varepsilon}\left[(X^2+\log B)\mathbf 1_{\{|X|>M_B\}}\right].
\]
Then using the tail bound from \cref{lem:rademacher-inputs},
\[
\mathbb E_{\varepsilon}\left[X^2\mathbf 1_{\{|X|>M_B\}}\right]
=
M_B^2\Pr[|X|>M_B]+\int_{M_B^2}^{\infty}\Pr[X^2>v]\,\dd v
\le
C(M_B^2+1)e^{-M_B^2/2}.
\]
Also,
\[
(\log B)\Pr[|X|>M_B]\le C(\log B)e^{-M_B^2/2}.
\]
Since $M_B^2=5\log B$, this gives
\[
B^2\mathbb E_{\varepsilon,U}\left[(X-\quant(X))^2\mathbf 1_{\{|X|>M_B\}}\right]
\le
CB^2(\log B)e^{-5\log B/2}
=
C(\log B)B^{-1/2}
=
o(1).
\]
Combining the central and tail contributions,
\[
B^2\mathbb E_{\varepsilon,U}(X_a-\quant(X_a))^2
\le
\frac{\pi\sqrt 3}{2}+o(1),
\]
where the $o(1)$ term is uniform over $d$ and $a$.
\end{proof}

\subsection{Proof of \cref{thm:vector-main-biased}}

We now return to the vector quantizer in \cref{alg:hsgb} and prove \cref{thm:vector-main-biased}. The following lemma is the only
property of the randomized Hadamard transform that we need: after the normalization, each coordinate is exactly a Rademacher sum with a unit coefficient vector.

\begin{lemma}\label{lem:Hadamard-Rademacher}
Fix $x\in \calS^{d-1}$, and let $y=HDx$ and $z_i =\sqrt d y_i$
for $i \in [d]$ as in \cref{alg:hsgb}. For every $i\in[d]$, there exists a unit vector
$a^{(i)}\in \mathbb R^d$ such that
$z_i =X_{a^{(i)}}$.
\end{lemma}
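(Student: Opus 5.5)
The proof is a direct computation from the definitions, so I will just unwind the matrix product. Write $y = HDx$, so that
\[
y_i=\sum_{j=1}^d H_{ij}D_{jj}x_j=\sum_{j=1}^d H_{ij}\eps_j x_j .
\]
The one structural fact needed is that the normalized Hadamard matrix has entries $H_{ij}=\pm 1/\sqrt d$. Writing $\sigma_{ij}:=\sqrt d\,H_{ij}\in\{-1,+1\}$, I obtain
\[
z_i=\sqrt d\,y_i=\sum_{j=1}^d \sigma_{ij}x_j\,\eps_j .
\]

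I then set $a^{(i)}_j:=\sigma_{ij}x_j$. Because $\sigma_{ij}^2=1$, this gives $\sum_j (a^{(i)}_j)^2=\sum_j x_j^2=1$, so $a^{(i)}$ is a unit vector. By construction $z_i=\sum_j a^{(i)}_j\eps_j=X_{a^{(i)}}$, which is exactly the claim.

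The form of the statement, "there exists a unit vector $a^{(i)}$ with $z_i = X_{a^{(i)}}$", is chosen so that the signs $\sigma_{ij}$ can be folded into the coefficient vector. Nothing has to be said about the joint law of the $\sigma_{ij}\eps_j$; the identity holds pointwise, with the same $\eps$.

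There is no real obstacle here. The only point to watch is the normalization convention: "normalized Hadamard" must mean entries of magnitude exactly $1/\sqrt d$, for instance Sylvester's construction divided by $\sqrt d$, and not merely some orthogonal matrix. This matches the paper's earlier use of the identity $\sqrt d(HDx)_i=\sum_j \eps_j x_j$ in distribution.
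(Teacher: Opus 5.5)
Your proof is correct and matches the paper's argument exactly: you set $a^{(i)}_j=\sqrt d\,H_{ij}x_j$ and use $H_{ij}^2=1/d$ to get $\sum_j (a^{(i)}_j)^2=\sum_j x_j^2=1$. As in the paper, the identity $z_i=X_{a^{(i)}}$ then holds pointwise in $\eps$, not merely in distribution.
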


\begin{proof}
By the definition of $D$, we have $D=\operatorname{diag}(\varepsilon_1,\ldots,\varepsilon_d)$.
Thus
\[
z_i
=
\sqrt d\,(HDx)_i
=
\sum_{j=1}^d \sqrt d\,H_{i, j}x_j\eps_j.
\]
Define
\[
a_j^{(i)}:=\sqrt d\,H_{i, j}x_j.
\]
Since $H$ is a normalized Hadamard matrix, $H_{i, j}^2=1/d$ for every $i,j$. Hence
\[
\sum_{j=1}^d (a_j^{(i)})^2
=
\sum_{j=1}^d dH_{i, j}^2x_j^2
=
\sum_{j=1}^d x_j^2
=
1.
\]
Therefore $z_i=\sum_ja_j^{(i)}\varepsilon_j=X_{a^{(i)}}$, as claimed.
\end{proof}

With this, we can directly apply our scalar estimate for each coordinate and obtain the mean squared error bound for the vector quantizer.

\begin{proof}[Proof of \cref{thm:vector-main}]
Fix $x\in \calS^{d-1}$. Let $y=HDx$, and let $z_i=\sqrt d\,y_i$ be the scalar input
quantized in coordinate $i$. If $\widetilde y$ denotes the vector reconstructed in the
Hadamard domain, then, by the scaling in \cref{alg:hsgb},
\[
\widetilde y_i=\frac{1}{\sqrt d}\quant(z_i).
\]
Since $H$ and $D$ are orthogonal,
\[
\|x-\widetilde x\|_2^2
=
\|HDx-\widetilde y\|_2^2
=
\frac{1}{d}\sum_{i=1}^d (z_i-\quant(z_i))^2.
\]
Taking expectation over $D$ and $U$, and using linearity of expectation,
\[
\mathbb E_{D,U}\|x-\widetilde x\|_2^2
=
\frac{1}{d}\sum_{i=1}^d
\mathbb E_{D,U}(z_i-\quant(z_i))^2.
\]
By the \cref{lem:Hadamard-Rademacher}, each $z_i$ is equal to $X_{a^{(i)}}$ for a unit vector
$a^{(i)}$. Applying \cref{lem:scalar-main} to each coordinate gives
\[
\mathbb E_{D,U}(z_i-\quant(z_i))^2
\le
\frac{\pi\sqrt 3/2+o(1)}{B^2}.
\] 
Then averaging over $i$, we obtain
\[
\mathbb E_{D,U}\|x-\widetilde x\|_2^2
\le
\frac{\pi\sqrt 3/2+o(1)}{B^2}.
\]
\end{proof}

\subsection{An unbiased variant of \cref{alg:hsgb}}\label{subsec:unbiased}

In this section we modify the scalar quantizer, in particular the scalar reconstruction rule, so
that the resulting vector estimator is unbiased, while preserving the same
asymptotic distortion bound. Recall that
\[
F(t)=\frac{1}{A}\int_{-\infty}^t \phi(s)^{1/3}\,ds,
\qquad
A=\int_{\mathbb R}\phi(s)^{1/3}\,ds.
\]
Previously in \cref{alg:hsgb}, we define buckets with $1/B$ spacing in the $F$-coordinates, and reconstruct each bucket at the inverse image ($F^{-1}$) of its midpoint. In this section, we will use buckets with spacing $1/(B-1)$, and we will replace $F^{-1}$ with a   reconstruction function
$G:[-\frac{1}{2(B-1)},1+\frac{1}{2(B-1)}]\to\R$
satisfying

\begin{equation}\label{eq:G}
    (B-1)
\int_{r-\frac{1}{2(B-1)}}^{r+\frac{1}{2(B-1)}} G(s) \dd s
=
F^{-1}(r)
\qquad\text{for every }r\in(0,1).
\end{equation}
Moreover, we will choose $G$ such that in the central region $[-\sqrt{5 \log B}, \sqrt{5 \log B}]$, $G$ is close to $F^{-1}$. Formally:

\begin{lemma}\label{lem:G-requirements}
    There exists an explicitly defined function $G:[-\frac{1}{2(B-1)},1+\frac{1}{2(B-1)}]\to\R$ that satisfies \cref{eq:G}. 
    Moreover, it can be chosen so that for all $|t|\le \sqrt{5\log B}$ and all $|v|\le 1/2$,
\begin{equation}\label{eq:central-close}
    G\left(F(t)+\frac{v}{B-1}\right)=t+\frac{v}{(B-1)F'(t)}+
o(1)\frac{1}{(B-1)F'(t)}.
\end{equation}
\end{lemma}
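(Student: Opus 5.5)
The construction starts from the functional equation obtained by differentiating \cref{eq:G}. Write $w:=1/(B-1)$ and $h:=w/2$, so that $h$ is the half-width of the averaging window in \cref{eq:G}. Differentiating in $r$ gives
\[
G(r+h)-G(r-h)=w\,(F^{-1})'(r),\qquad r\in(0,1).
\]
Conversely, if $G$ satisfies this difference relation and \cref{eq:G} holds at a single point, then \cref{eq:G} holds on all of $(0,1)$. The reason is that the left side of \cref{eq:G} is absolutely continuous in $r$ with derivative $(B-1)(G(r+h)-G(r-h))=(F^{-1})'(r)$, and $(0,1)$ is connected. The plan is therefore to prescribe $G$ on one base cell and propagate it by the recursion, anchoring at $r=1/2$ so that the singularities of $(F^{-1})'$ at $0$ and $1$ are only ever reached at the two endpoints $-h$ and $1+h$.

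\textbf{Step 1 (construction).} On the base cell $I_0:=[\tfrac12-h,\tfrac12+h)$ set $G(s):=F^{-1}(s)$.
\begin{itemize}
\item Extend rightward by $G(s+w):=G(s)+w\,(F^{-1})'(s+h)$.
\item Extend leftward by $G(s-w):=G(s)-w\,(F^{-1})'(s-h)$.
\end{itemize}
The arguments of $(F^{-1})'$ stay inside $(0,1)$. They approach $1$ only as $s\to 1+h$ and approach $0$ only as $s\to -h$. Hence $G$ is finite and piecewise continuous on the open interval, and it is integrable on every window $[r-h,r+h]$ with $r\in(0,1)$.

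By construction the difference relation holds, so it remains to check \cref{eq:G} at $r=1/2$. There it reads $(B-1)\int_{I_0}F^{-1}=F^{-1}(1/2)=0$. This holds exactly because $F^{-1}$ is odd about $1/2$, which follows from \cref{clm:F-distribution}.

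\textbf{Step 2 (explicit form).} Unrolling the recursion, for $s$ in the $k$-th cell to the right (the left side is symmetric) we get
\[
G(s)=F^{-1}(s-kw)+\sum_{j=1}^{k} w\,(F^{-1})'\bigl(s-(j-\tfrac12)w\bigr).
\]
The sum is exactly the midpoint rule for $\int_{s-kw}^{s}(F^{-1})'=F^{-1}(s)-F^{-1}(s-kw)$. Hence
\[
G(s)=F^{-1}(s)+E(s),
\]
where $E(s)$ is the midpoint-rule error. It satisfies
\[
|E(s)|\lesssim w^2\int_{1/2}^{s}\bigl|(F^{-1})'''\bigr|\le w^2 \sup_{[1/2,s]}\bigl|(F^{-1})''\bigr|\cdot O(1),
\]
where the last bound needs a monotonicity check of $(F^{-1})''$ on each side of $1/2$.

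Taking the base cell to be $F^{-1}$ itself, rather than a constant, is exactly what makes the term $F^{-1}(s-kw)$ cancel. A constant choice would leave an $O(w)$ sawtooth, which is not $o(w/F'(t))$ near $t=0$.

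\textbf{Step 3 (central estimate).} Take $s=F(t)+vw$ with $|t|\le M_B=\sqrt{5\log B}$ and $|v|\le 1/2$.
\begin{itemize}
\item The linearization argument from the proof of \cref{lem:central-linearization} (via \cref{cor:F-bound}) gives $F^{-1}(s)=t+(1+o(1))\,vw/F'(t)$. This is $t+vw/F'(t)+o(1)\,w/F'(t)$.
\item For the error term, note that $(F^{-1})''(u)=-F''(y)/F'(y)^3$ with $y=F^{-1}(u)$, and that $F''(y)=-(y/3)F'(y)$. Hence on the relevant range (points within $[-M_B-1,M_B+1]$ in the $t$-coordinate) we have $|(F^{-1})''|\lesssim M_B/F'(t)^2$, up to the $(1+o(1))$ ratio already established for $F'$.
\item Therefore $|E(s)|\big/\bigl(w/F'(t)\bigr)\lesssim w\,M_B/F'(t)\le B^{-1}\sqrt{\log B}\,B^{5/6+o(1)}\to0$, uniformly.
\end{itemize}
This yields \cref{eq:central-close}.

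The main obstacle is Step 3's control of the midpoint error along the whole path from $1/2$ to $s$. I would first try to bound $\int|(F^{-1})'''|$ by a constant multiple of $\sup|(F^{-1})''|$, using the explicit Gaussian form of $F'$ and the monotonicity of $(F^{-1})''$ on each side of $1/2$. Failing that, I would directly sum the per-cell errors $w^3\sup_{\mathrm{cell}}|(F^{-1})'''|$ and bound the resulting Riemann sum by the integral.
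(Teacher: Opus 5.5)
Your proposal is correct and follows essentially the same route as the paper. It uses the same construction: $G=F^{-1}$ on the cell around $1/2$, propagated by $G(r+h)-G(r-h)=w\,(F^{-1})'(r)$ and anchored by the oddness of $F^{-1}$ about $1/2$. It then identifies $G-F^{-1}$ as a composite midpoint-rule error and bounds it using $(F^{-1})'''>0$, so that $\int|(F^{-1})'''|$ collapses to $|(F^{-1})''(s)|=O(M_B/F'(t)^2)$. The one step you leave unjustified, $|E(s)|\lesssim w^2\int|(F^{-1})'''|$ with no endpoint correction, does hold via the Peano-kernel form of the midpoint error (kernel bounded by $w^2/8$). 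The paper instead keeps an extra $O(w^3\sup|(F^{-1})'''|)$ term, which is also negligible on the central region.
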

\begin{proof}
We denote the spacing of the grid buckets $\delta := \frac{1}{B-1}$. 
We start by defining $G$  on the open interval $I := (-\frac{\delta}{2},  1+\frac{\delta}{2})$ as follows:  for 
$u\in(\frac{1-\delta}{2},\frac{1+\delta}{2}]$ and $k \in \Z$, whenever
$u+k\delta \in I$, we define $G$ as

\[
G(u+k\cdot \delta)
:=F^{-1}(u)+
\begin{cases}
    \delta\sum_{j=0}^{k-1}(F^{-1})'\bigl(u+(j+1/2)\cdot \delta \bigr)  & \text{ if }  k>0 \\
    0 & \text{ if } k = 0 \\
    -\delta\sum_{j=k}^{-1}(F^{-1})'\bigl(u+(j+1/2)\cdot \delta \bigr)  & \text{ if }  k<0 \\
\end{cases} .
\]
The key idea behind this definition is to enforce the recurrence
\[
G(u+(k+1)\delta)-G(u+k\delta)
=
\delta (F^{-1})'\bigl(u+(k+1/2)\delta\bigr).
\]
We first argue that $G$ satisfies \cref{eq:G}. For every $r \in(0, 1)$, there is a unique pair of $u$ and $k$ such that we can write $r-\delta/2 = u+k\cdot \delta$, it follows that
$G(r+\delta/2)-G(r-\delta/2) =\delta (F^{-1})'(r).
$
Therefore,
\[
\frac{\dd }{ \dd r} \left( \frac{1}{\delta}
\int_{r-\delta/2}^{r+\delta/2}G(s) \dd s \right) = \frac{1}{\delta}\left(G(r+\delta/2)-G(r-\delta/2)\right)  = (F^{-1})'(r),
\]
i.e.  
\[C := \frac{1}{\delta}
\int_{r-\delta/2}^{r+\delta/2}G(s) \dd s - F^{-1}(r)\]
is constant in $r$. Evaluating this at $r=1/2$ shows that $C = 0$. Indeed, on $(\frac{1-\delta}{2}, \frac{1+\delta}{2}]$, $G = F^{-1}$, and $F(t) = \Phi(t/\sqrt{3})$ (\cref{clm:F-distribution}) implies that $F^{-1}(1/2) = 0$ and $F^{-1}(1-r)=-F^{-1}(r)$ for every $r$.

This shows that $G$ satisfies \cref{eq:G} on the open interval $I$. We extend it arbitrarily to the endpoints for notational convenience in the codebook. These endpoint values are irrelevant for \cref{eq:G} since they are measure-zero for the integral.

It remains to show \cref{eq:central-close}. We first compare $G$ with $F^{-1}$.
Fix $r\in I$, and write $r=u+k\delta$ again with the unique pair of $u\in(\frac{1-\delta}{2},\frac{1+\delta}{2}]$ and $k \in \Z$. In the following, we focus on  $k 
\geq 0$, with the convention that $\sum^{-1}_{j=0 } (\cdot )= 0$. The case $k <0$ holds symmetrically.  By the definition
of $G$,
\[
G(r)-F^{-1}(r)
=
\sum_{j=0}^{k-1}
\left[
\delta (F^{-1})'\bigl(u+(j+1/2)\delta\bigr)
-
\int_{u+j\delta}^{u+(j+1)\delta} (F^{-1})'(s)\dd s
\right].
\]
Hence, by the midpoint rule error bound,
\[
|G(r)-F^{-1}(r)|
\le
O\left(\delta^3
\sum_{j=0}^{k-1}
\sup_{s\in [u+j\delta,u+(j+1)\delta]}
\left|(F^{-1})'''(s)\right|\right).
\]
Equivalently, for $J$ denoting the interval between $u$ and $r$, 
\begin{equation*}
|G(r)-F^{-1}(r)|
\le
O\left(\delta^2\int_{J}\left|(F^{-1})'''(s)\right|\dd s\right)
+
O\left(\delta^3\sup_{s\in J}\left|(F^{-1})'''(s)\right|\right).    
\end{equation*}

It remains to bound this RHS plus $F^{-1}(r) - t$, using our parameters from \cref{eq:central-close}: 
\[r=F(t)+v\delta, \qquad |t|\le M_B, \qquad  |v|\le 1/2.\]
In \cref{lem:central-linearization} (\cref{eq:linear-approx}),  we have shown  that $F^{-1}(r) - t = (1+ o(1))\frac{v\delta}{F'(t)}$. Also, we have shown that
$F^{-1}(r) \in [-M_B-1,M_B+1]$ for all sufficiently large $B$. Combining this with the fact that $F^{-1}(u)=O(\delta)$ for $u\in(\frac{1-\delta}{2}, \frac{1+\delta}{2}]$, we have that for every $s \in J$, $|F^{-1}(s)|\leq M_B+1$. And by the definition of $F$, one can observe that
\[
(F^{-1})'(s)=\frac1{F'(F^{-1}(s))},\qquad
(F^{-1})''(s)=\frac{F^{-1}(s)}{3F'(F^{-1}(s))^2},\]
and\[
(F^{-1})'''(s)
=
\left(\frac13+\frac{2(F^{-1}(s))^2}{9}\right)\frac1{F'(F^{-1}(s))^3}.
\]
Combining all these, we get
\[
\int_{J}\left|(F^{-1})'''(s)\right|\dd s
\le
O\left(\frac{M_B}{F'(t)^2}\right),
\qquad
\sup_{s\in J}\left|(F^{-1})'''(s)\right|
\le
O\left(\frac{M_B^2}{F'(t)^3}\right).
\]
Returning to $\cref{eq:central-close}$, we get that
\begin{align*}
    G(r) -t &\leq F^{-1}(r) - t + O\left(\delta^2\int_{J}\left|(F^{-1})'''(s)\right|\dd s\right)
+
O\left(\delta^3\sup_{s\in J}\left|(F^{-1})'''(s)\right|\right) \\
&\leq (1+o(1))\frac{v\delta}{F'(t)} + \delta^2O\left(\frac{M_B}{F'(t)^2}\right) + \delta^3 O\left(\frac{M_B^2}{F'(t)^3}\right) \\
&= (1+o(1))\frac{v\delta}{F'(t)} + \poly \log B \cdot ( B^{-1/6 +o(1)} + B^{-1/3 +o(1)} )\cdot \frac{\delta}{F'(t)}\\
&= \frac{v\delta}{F'(t)} + o(1)\frac{\delta}{F'(t)}
\end{align*}
where in the third step we used $M_B = \sqrt{5 \log B}$ and $\frac{\delta}{F'(t)}
\le
\Theta(\delta \exp(M_B^2/6))
=
B^{-1/6+o(1)}$.
\end{proof}

We now describe the unbiased variant of \cref{alg:hsgb} in \cref{alg:hsgb-unbiased}. The only change is the procedure of \textsc{ConstructScalarCodebook}, so we omit repeating the procedures $\textsc{VectorQuant}$ and $\textsc{VectorDequant}$.

\begin{algorithm}
\small
\begin{algorithmic}[1]
\caption{Unbiased dithered scalar Gaussian quantization}
\label{alg:hsgb-unbiased}
\State \textbf{input:} dimension $d$ and bit-width $b$
\State Let $H\in\R^{d\times d}$ be a normalized Hadamard matrix  and let $B\gets 2^b$.
\State Sample independent Rademacher random variables $\eps_1,\ldots,\eps_d$ and set $D\gets\diag(\eps_1,\ldots,\eps_d)$.
\State Sample an offset $U\sim\Unif[0,1)$.

\Statex \hrulefill

\Procedure{ConstructScalarCodebook}{$B, U$}
\State Define buckets $\mathcal B_j\gets\{t\in\R: \lfloor (B-1)F(t) - U \rfloor + 1 = j\}$  for $j=0,\ldots,B-1$.
\State Define reconstruction values $q_j\gets G (\frac{j + U -1/2}{B-1})$ for $j=0,\ldots,B-1$.
\EndProcedure

\end{algorithmic}
\end{algorithm}

In English, we use a shifted grid in the
$F$-coordinate with spacing $1/(B-1)$. For $r\in(0,1)$, define grid points
\[
h(r)
=
\left\lfloor (B-1)r-U\right\rfloor ,
\]
where $U\sim \operatorname{Unif}[0,1)$. We only need to focus on $r \in (0, 1)$ since later we will take $r = F(t)$, which is in $(0, 1)$ for any real $t$. Since $0< r < 1$, the index satisfies
$h(r)\in\{-1,0,1,\ldots,B-2\}$.
Thus $h(r)+1\in\{0,1,\ldots,B-1\}$, so the index is still encodable using
$\log_2 B=b$ bits.
We denote the midpoint of the shifted cell containing $r$ 
\[
m(r)
=
\frac{h(r)+U+1/2}{B-1}.
\]
Then for every fixed $r\in(0,1)$,
\[
m(r)
=
r+\frac{V}{B-1},
\]
where $V$ is a random variable $
V\sim \operatorname{Unif}\left[-\frac12,\frac12\right].$ 
In the following, we again denote the scalar quantizer:
\[\quant(t) := q_j \qquad \text{when $t \in \mathcal{B}_j$}.\]
Then by definition, 
\begin{align}\label{eq:uniform-unbiased}
    \quant(t) &= G\left(\frac{\lfloor (B-1)F(t)-U\rfloor+U+1/2}{B-1}\right) \notag \\
    &= G\left(F(t)+\frac{V}{B-1}\right),  \qquad V\sim \operatorname{Unif}\left[-\frac12,\frac12\right]
\end{align}

 Combining this with the requirements on $G$ (\cref{lem:G-requirements})  gives both the unbiasedness and the scalar estimate bound. 
\begin{lemma}
\label{lem:scalar-unbiased}
For every $t\in\mathbb R$, $\E_U[\quant(t)] = t$.
Moreover, let
$X_a=\sum_{i=1}^d a_i\eps_i$
where $\varepsilon_1,\ldots,\varepsilon_d$ are independent Rademacher random
variables and $a\in\mathbb R^d$  be a unit vector. Then,
uniformly over $d\ge1$ and $a$,
\[
\mathbb E_{\eps,U}
\bigl(X_a-\quant(X_a)\bigr)^2
\le
\left(\frac{\pi\sqrt3}{2}+o(1)\right)B^{-2}.
\]
\end{lemma}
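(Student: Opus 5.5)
Both claims follow from the representation \cref{eq:uniform-unbiased}, $\quant(t)=G\bigl(F(t)+V/(B-1)\bigr)$ with $V\sim\Unif[-1/2,1/2]$, combined with \cref{lem:G-requirements}.

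\emph{Unbiasedness.} Fix $t$ and put $r=F(t)\in(0,1)$. Since $V$ is uniform,
\[
\E_U[\quant(t)]=\int_{-1/2}^{1/2}G\Bigl(r+\tfrac{v}{B-1}\Bigr)\dd v=(B-1)\int_{r-\frac{1}{2(B-1)}}^{r+\frac{1}{2(B-1)}}G(s)\dd s .
\]
By \cref{eq:G} this equals $F^{-1}(r)=t$. Before using this, I will check that $V$ is exactly uniform for each fixed $r$: the fractional part of $(B-1)r-U$ is uniform on $[0,1)$. I will also check that the argument of $G$ stays in its domain $[-\frac{1}{2(B-1)},1+\frac{1}{2(B-1)}]$, which holds because $r\in(0,1)$.

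\emph{Central MSE.} Let $M_B=\sqrt{5\log B}$ and split on the event $|X_a|\le M_B$, as in the proof of \cref{lem:scalar-main}. For $|t|\le M_B$, \cref{eq:central-close} gives
\[
\quant(t)-t=\frac{V+o(1)}{(B-1)F'(t)},
\]
with the $o(1)$ uniform in $t$ and $V$. Squaring and averaging over $V$, using $\E V^2=1/12$ and $\E|V|\le 1$, gives
\[
\E_U(t-\quant(t))^2\le\frac{1/12+o(1)}{(B-1)^2F'(t)^2}.
\]
Since $(B-1)^{-2}=(1+o(1))B^{-2}$, this is the bound of \cref{lem:central-linearization}. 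Taking the expectation over $\eps$ on the central event and applying $\E e^{X_a^2/3}\le\sqrt3$ from \cref{lem:rademacher-inputs} then gives the contribution $(1+o(1))A^3/(12B^2)=(\pi\sqrt3/2+o(1))B^{-2}$, exactly as before.

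\emph{Tail MSE (main obstacle).} We need an analogue of \cref{lem:global-envelope}, namely $\E_U(t-\quant(t))^2\le C(t^2+\poly\log B)$, or any polynomial-in-$B$ bound, since the tail probability $2e^{-5\log B/2}=2B^{-5/2}$ absorbs it. This cannot be read off from monotonicity, because $G$ is defined by a recursion and is not obviously monotone or bounded by $F^{-1}$. My plan is to bound $|G|$ on its whole domain directly from the construction. Each step of the recursion adds $\delta(F^{-1})'$ at a midpoint, and the only blow-up occurs in the last half-cell near $0$ and $1$, where $(F^{-1})'(s)\approx 1/F'(F^{-1}(s))$ is of order $1/(s\sqrt{\log(1/s)})$.

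\begin{itemize}
\item Compare each midpoint term with the integral of $(F^{-1})'$ over the adjacent cell. Because $(F^{-1})'$ is convex and symmetric about $1/2$, the midpoint sums are dominated by the integrals, up to the final term.
\item This gives $|G(s)|\le |F^{-1}(s')|+\delta(F^{-1})'(m)$, where $s'$ is a point within $\delta$ of $s$ inside $(0,1)$ and $m$ is the midpoint of the last cell, at distance $\ge \delta/2\cdot$(fraction) from the boundary.
\item Near the boundary, $G(u+k\delta)$ involves $(F^{-1})'$ evaluated at $u+(k-1/2)\delta$, whose distance to $1$ equals the random offset times $\delta$. Hence $\E_U[\delta(F^{-1})'(\cdot)^2]$ becomes an integral of type $\int_0^1 (1/w)^2\dd w$, which can diverge.
\end{itemize}

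To avoid this divergence, I will instead bound the unsquared quantity $\E_U|\quant(t)|$. I will then use the exact identity
\[
\E_U\quant(t)^2\le \max|\,\cdot\,|,
\]
or, alternatively, control the top and bottom buckets by computing $\E_U G(\cdot)^2$ explicitly through $G(r\pm\delta/2)$ and $F^{-1}$. If a squared moment of the extreme reconstruction values really does diverge in $U$, the statement needs care. In that case I would verify that the relevant midpoint, $u+(k-1/2)\delta$, remains $\Omega(\delta)$ away from $\{0,1\}$ because of the choice of $u$ in the construction. That gives $|G|\le O(\sqrt{\log B})+O(1/\sqrt{\log B})$ uniformly, and hence $\E_U(t-\quant(t))^2\le C(t^2+\log B)$. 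The tail estimate from the proof of \cref{lem:scalar-main} then finishes the argument.
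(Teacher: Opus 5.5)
Your unbiasedness argument and your central-event estimate are the same as the paper's and are fine. The gap is in the tail.

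\textbf{The envelope you aim for is false.} You want $\E_U(t-\quant(t))^2\le C(t^2+\poly\log B)$, or any bound polynomial in $t$ and $B$, but no such bound holds for the unbiased quantizer. The fallback you propose for proving it rests on a false claim. The last increment in $G(s)$ is $\delta(F^{-1})'(s-\delta/2)$, and $s-\delta/2\to 1$ as $s\to 1+\delta/2$. So that midpoint is not $\Omega(\delta)$ away from $\{0,1\}$, and $G$ is unbounded on its domain.

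It must be unbounded: \cref{eq:G} gives $(B-1)\int_{r-\delta/2}^{r+\delta/2}G(s)\dd s=F^{-1}(r)\to\infty$ as $r\to1$. This rules out any uniform bound $|G|\le O(\sqrt{\log B})$.

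Quantitatively, fix $B$, take $t$ large, and put $w=1-F(t)$. On the event $V\ge\frac12-w/\delta$, which has probability $w/\delta$, the last midpoint is at least $1-2w$. Hence $\quant(t)\gtrsim \delta/(w\sqrt{\log(1/w)})$ on that event, and $\E_U\quant(t)^2\gtrsim \delta/(w\log(1/w))$. This grows like $e^{t^2/6}$ up to polynomial factors in $t$.

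Your diagnosis of the difficulty is also off. For fixed $t$ nothing diverges in $U$: the last midpoint is at distance $(1-F(t))+(\frac12-V)\delta\ge 1-F(t)$ from $1$. The real issue is growth in $t$, not an $\int_0^1 w^{-2}\dd w$ singularity in the offset. Switching to a bound on $\E_U|\quant(t)|$ does not help, because you need a second moment.

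\textbf{The missing idea} is to let the envelope grow in $t$ at exactly the rate $\delta/F'(t)$ that the central analysis already pays for. Take $t\ge 0$ and $s=F(t)+v\delta$ with $|v|\le\frac12$. Every midpoint entering $G(s)$ lies in $(\frac12,\,s-\delta/2]\subseteq(\frac12,F(t)]$, and $(F^{-1})'$ is increasing on $[\frac12,1)$.
\begin{itemize}
\item All increments except the last are dominated by the integral of $(F^{-1})'$, giving at most $t+O(1)$.
\item The last increment is at most $\delta(F^{-1})'(F(t))=\delta/F'(t)$.
\end{itemize}
This is your bullet $|G(s)|\le|F^{-1}(s')|+\delta(F^{-1})'(m)$, completed by noting $m\le F(t)$ rather than asserting $m$ stays away from the boundary. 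By symmetry, $|\quant(t)|\le C(1+|t|+\delta/F'(t))$, so $\E_U(t-\quant(t))^2\le C(1+t^2+\delta^2e^{t^2/3})$.

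The $1+t^2$ part is handled exactly as in \cref{lem:scalar-main}. The new part is absorbed by the exponential-square moment. From $\Prb(|X_a|>s)\le 2e^{-s^2/2}$ one gets $\E_\eps[e^{X_a^2/3}\mathbf{1}_{\{|X_a|>M_B\}}]\le 6e^{-M_B^2/6}=o(1)$, so this piece contributes $o(B^{-2})$. This is how the paper closes the tail.
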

\begin{proof}
We denote the spacing of grid buckets $\delta:=\frac{1}{B-1}$. Fix $t\in\R$. By \cref{eq:G} and \cref{eq:uniform-unbiased},
\[
\E_U[\quant(t)]
=\frac{1}{\delta}\int_{F(t)-\delta/2}^{F(t)+\delta/2}G(s)\dd s
=F^{-1}(F(t))=t .
\]
This proves the unbiasedness.

It remains to prove the mean squared error bound, which parallels the proof to \cref{lem:scalar-main}. Let $M_B:=\sqrt{5\log B}$. We first consider the contribution from $|X|\le M_B$. By \cref{lem:G-requirements},(\cref{eq:central-close}), uniformly over $|t|\le M_B$ and $|v|\le 1/2$,
\[
G(F(t)+v\delta)
=
t+\frac{v\delta}{F'(t)}
+
o(1)\frac{\delta}{F'(t)} .
\]
In our case, $v=V$ is uniform on $[-\frac{1}{2}, \frac{1}{2}]$, therefore, $\E V^2 = \frac{1}{12}$ and thus
\begin{align*}
\E_U(t-\quant(t))^2 &= \E_U(t-G(F(t)+V\delta))^2 \\
&= \frac{\delta^2}{F'(t)^2} \E(V+o(1))^2\\
&\le
(1+o(1))\frac{\delta^2}{12F'(t)^2},    
\end{align*}

uniformly for $|t|\le M_B$. Now we take the expectation over $X_a$. Using $F'(t)=A^{-1}(2\pi)^{-1/6}e^{-t^2/6}$, $\delta = \frac{1}{B-1}$, and $\E_\eps e^{X_a^2/3} \leq \sqrt{3}$, this becomes
\begin{align*}
    \E_{\eps,U}\left[(X_a-\quant(X_a))^2\mathbf 1_{\{|X_a|\le M_B\}}\right]
&\le
(1+o(1))\frac{\delta^2A^2}{12}(2\pi)^{1/3}\E_\eps e^{X_a^2/3} \\
&= \left(\frac{\pi\sqrt3}{2}+o(1)\right)B^{-2}
\end{align*}
which concludes the  central contribution. We now bound the tail contribution. Similar to \cref{lem:global-envelope}, we use a crude bound following from the construction of $G$. For every $t\in\mathbb R$ and every $|v|\le 1/2$,
\[
|G(F(t)+v\delta)|
\le
C\left(1+|t|+\frac{\delta}{F'(t)}\right).
\]
where $C$ is a constant. Indeed, suppose first that $t\ge 0$. In the definition of $G$ in \cref{lem:G-requirements}, the values are obtained from the interval around $1/2$ by summing midpoint values of $(F^{-1})'$. Since $(F^{-1})'$ is increasing on $[1/2,1)$, the sum up to $F(t)+v\delta$ is bounded by the corresponding integral up to $F(t)$, plus one endpoint term. This gives $C(1+t+\delta/F'(t))$. The case $t<0$ is symmetric, using that $F^{-1}(1-s)=-F^{-1}(s)$ and that $(F^{-1})'$ is symmetric around $1/2$.

Consequently,
\[
\E_U(t-\quant(t))^2
\le
C\left(1+t^2+\frac{\delta^2}{F'(t)^2}\right).
\]
Applying this with $t=X_a$ gives
\begin{align*} 
&\E_{\eps,U}\left[(X_a-\quant(X_a))^2\mathbf 1_{\{|X_a|>M_B\}}\right] \\
&\le
O\left(\E_\eps\left[(1+X_a^2)\mathbf 1_{\{|X_a|>M_B\}}\right]
+
\E_\eps\left[e^{X_a^2/3}\mathbf 1_{\{|X_a|>M_B\}}\right]B^{-2}\right).
\end{align*}
Using \cref{lem:rademacher-inputs} and the same argument in \cref{lem:scalar-main}, this tail contribution is at most
\[
O( \poly \log B \cdot e^{-5\log B/2}) = o(1)B^{-2}.
\]
Combining the central and tail estimates, we obtain
\[
\E_{\eps,U}(X_a-\quant(X_a))^2
\le
\left(\frac{\pi\sqrt 3}{2}+o(1)\right)B^{-2}.
\]
uniformly over all $d\ge 1$ and all unit vectors $a\in\mathbb R^d$. 
\end{proof}

Using \cref{lem:scalar-unbiased}, we immediately obtain our main theorem (\cref{thm:vector-main}), that $\widetilde x$ is unbiased with the claimed mean squared error bounded.

\begin{proof}[Proof of \cref{thm:vector-main}]
    Regarding the mean squared error bound, as in the proof of \ref{thm:vector-main-biased}, fix $x\in \calS^{d-1}$, we have that
\[
\mathbb E_{D,U}\|x-\widetilde x\|_2^2
=
\frac{1}{d}\sum_{i=1}^d
\mathbb E_{D,U}(z_i-\quant(z_i))^2.
\]
where  $z_i=\sqrt d y_i = \sqrt d (HDx)_i$ follows the distribution of $X_a$ for some unit vector $a$.
 Applying  \cref{lem:scalar-unbiased} to each coordinate gives
\[
\mathbb E_{D,U}(z_i-\quant(z_i))^2
\le
\frac{\pi\sqrt 3/2+o(1)}{B^2}.
\] 
Then averaging over $i$, we obtain
\[
\mathbb E_{D,U}\|x-\widetilde x\|_2^2
\le
\frac{\pi\sqrt 3/2+o(1)}{B^2}.
\]

For the unbiasedness, we condition on $D$. Once $D$ is fixed, the vector $y=HDx$
and hence all $z_i=\sqrt d\,y_i$ are deterministic. By the unbiasedness part of \cref{lem:scalar-unbiased},
\[
\E_U[\quant(z_i)\mid D]=z_i .
\]
Thus
\[
\E_U[\widetilde y_i\mid D]
=
\frac1{\sqrt d}\mathbb E_U[\quant(z_i)\mid D]
=
\frac{z_i}{\sqrt d}
=
y_i .
\]
Since this holds for every coordinate,
\[
\mathbb E_U[\widetilde y\mid D]=y=HDx .
\]
Therefore
\[
\E_U[\widetilde x\mid D]
=
DH^\top \mathbb E_U[\widetilde y\mid D]
=
DH^\top HDx =x.
\]
Finally, taking expectation over $D$ gives
$\E_{D,U}[\widetilde x]
=
x$.
\end{proof}

\section{Analysis of the inner product quantizer} \label{app:inner-product}

We present the analysis of the inner product quantizer in this section, proving Theorem~\ref{thm:inner-product-error}. Similarly to how the analysis of our vector quantizer relied on a moment generating function bound for Rademacher random variables, the analysis of the inner product quantizer hinges on the following fourth moment bound satisfied by Rademacher random variables:

\begin{lemma}[Mixed fourth moment] \label{lm:fourth-moment}

Let $\eta_1,\ldots,\eta_d$ be independent Rademacher signs.  Let
\[
  X=\sum_{j=1}^d a_j\eta_j,
  \qquad
  Y=\sum_{j=1}^d b_j\eta_j,
\]
where $\norm{a}=\norm{b}=1$.  Then
\[
  \E[X^2Y^2]
  =
  1+2\ip{a}{b}^2-2\sum_{j=1}^d a_j^2b_j^2
  \le 3.
\]
Consequently,
\[
  \E\bigl[X^2(A^2+3Y^2)\bigr]
  \le A^2+9.
\]
\end{lemma}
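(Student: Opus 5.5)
The plan is to expand $X^2Y^2$ as a quadruple sum and use independence together with the moment structure of Rademacher signs. Write
\[
X^2Y^2=\sum_{i,j,k,l} a_ia_jb_kb_l\,\eta_i\eta_j\eta_k\eta_l .
\]
Since $\E[\eta_i\eta_j\eta_k\eta_l]$ equals $1$ exactly when the indices pair up (each distinct index appears an even number of times) and $0$ otherwise, only three pairing patterns survive: $\{i=j,\ k=l\}$, $\{i=k,\ j=l\}$, and $\{i=l,\ j=k\}$. We sum the contributions of these three patterns and then subtract the overcount on the diagonal $i=j=k=l$, which lies in all three patterns but has $\E[\eta_i^4]=1$ and so should be counted only once.

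The individual terms are as follows.
\begin{itemize}
\item The first pattern gives $\sum_i a_i^2\sum_k b_k^2=1$.
\item The second and third patterns each give $\sum_{i,j}a_ib_ia_jb_j=\ip{a}{b}^2$.
\item The diagonal is counted three times instead of once, so we subtract $2\sum_j a_j^2b_j^2$.
\end{itemize}
This yields the stated identity
\[
\E[X^2Y^2]=1+2\ip{a}{b}^2-2\sum_j a_j^2b_j^2 .
\]
For the upper bound, drop the nonpositive last term and use Cauchy--Schwarz, $\ip{a}{b}^2\le \norm{a}^2\norm{b}^2=1$, to obtain $\E[X^2Y^2]\le 3$.

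For the consequence, we read $A$ as a deterministic scalar, as it must be for the bound to make sense. By linearity,
\[
\E\bigl[X^2(A^2+3Y^2)\bigr]=A^2\,\E X^2+3\,\E[X^2Y^2].
\]
Here $\E X^2=\sum_j a_j^2=1$ by orthogonality of the $\eta_j$, and the previous bound gives $3\,\E[X^2Y^2]\le 9$. Together these give $A^2+9$.

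No step is a real obstacle. The only delicate point is the inclusion–exclusion bookkeeping on the diagonal: each of the three pairing patterns contains the all-equal tuples, so the correction is $-2\sum_j a_j^2b_j^2$, not $-3\sum_j a_j^2b_j^2$. To double-check this, one can verify the case $d=1$, where both sides equal $1$.
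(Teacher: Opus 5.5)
Your proof is correct and is essentially the paper's argument. Both expand the product, use that only paired Rademacher indices survive in expectation, and finish with Cauchy--Schwarz and $\E X^2=1$. The only difference is bookkeeping: the paper first uses $\eta_j^2=1$ to write $X^2=1+2\sum_{j<k}a_ja_k\eta_j\eta_k$, so no diagonal correction is ever needed, whereas you handle the diagonal by inclusion--exclusion over the three pairings, and you do so correctly.
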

\begin{proof}
Write
\[
  X^2
  =
  \sum_j a_j^2
  +
  2\sum_{j<k} a_j a_k \eta_j\eta_k
  =
  1+
  2\sum_{j<k} a_j a_k \eta_j\eta_k,
\]
and similarly
\[
  Y^2
  =
  1+
  2\sum_{j<k} b_j b_k \eta_j\eta_k.
\]
Multiplying and taking expectations, all terms vanish except those with the
same pair $\{j,k\}$.  Therefore
\[
  \E[X^2Y^2]
  =
  1+4\sum_{j<k} a_j a_k b_j b_k.
\]
Since
\[
  \ip{a}{b}^2
  =
  \sum_j a_j^2b_j^2
  +
  2\sum_{j<k} a_j a_k b_j b_k,
\]
we obtain
\[
  4\sum_{j<k} a_j a_k b_j b_k
  =
  2\ip{a}{b}^2
  -
  2\sum_j a_j^2b_j^2.
\]
This proves the identity.  The upper bound follows because
$\ip{a}{b}^2\le 1$ and $\sum_j a_j^2b_j^2\ge0$.

Finally, since $\E X^2=1$,  for any $\kappa > 1$, we have
\[
  \E\bigl[X^2(A^2+(\kappa^2-1)Y^2)\bigr]
  \le
  A^2+3(\kappa^2-1)
\]
Setting $\kappa=2$ gives the conclusion.
\end{proof}

\paragraph{Roadmap.} In what follows we prove an upper bound on the rate of the quantizer, i.e. show that it uses  constant number of bits per coordinates, in Section~\ref{sec:rate-bound}.  We then analyze the error of the quantizer and  prove Theorem~\ref{thm:inner-product-error}  in Section~\ref{sec:res-error-bound}.
\subsection{Rate Bound}\label{sec:rate-bound}

In this section, we show that the output of the residual quantizer from Algorithm~\ref{alg:residual_quantizer} can be stored in $O(d)$ bits. 

\begin{lemma}\label{lem:bits-usage}
    Let $B\ge2$ and let $r\in\R^d$ such that $\|r\|_2 \leq 2$.
Run the \textsc{ResidualQuant} procedure of \Cref{alg:residual_quantizer}, using \cref{alg:scalar_quantization}
for the residual scale. Then the output $\idx_\sigma, \ell, \lambda$ can be stored using
$\left(3+\frac{1}{2\ln 2}\right)d + O(\log\log(dB))$ total bits.
\end{lemma}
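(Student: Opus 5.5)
The plan is to count bits deterministically, component by component, for every realization of $D$ and the sign bits. There are three components. The scale index $\idx_\sigma$ costs $O(\log\log(dB))$ bits; this was already noted after \cref{alg:scalar_quantization}, since $s=\norm{r}/\sqrt d\in[0,2/\sqrt d]$. If $\idx_\sigma=0$, the vectors $\ell,\lambda$ are identically zero and need not be stored, so the bound is immediate. Otherwise we store the $d$ sign bits $\lambda_i$, costing $d$ bits, and the unary words $1^{\ell_i}0$, costing $\sum_i(\ell_i+1)=d+\sum_i\ell_i$ bits. It therefore suffices to show
\[
\sum_{i=1}^d \ell_i\le \Bigl(1+\tfrac{1}{2\ln 2}\Bigr)d .
\]

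For this, write $w_i:=v_i^2/\sigma^2$. Since $HD_{\mathrm{res}}$ (or $HD$) is orthogonal, $\sum_i v_i^2=\norm{r}^2=ds^2$. Because $\idx_\sigma\neq 0$, the quantized scale satisfies $\sigma\ge s$, so $\sum_i w_i\le d$. By definition $\ell_i$ is the smallest nonnegative integer with $2^{\ell_i}\ge \sqrt{w_i}$. Hence $\ell_i=0$ when $w_i\le1$, and $\ell_i=\lceil \tfrac12\log_2 w_i\rceil\le 1+\tfrac12\log_2 w_i$ when $w_i>1$.

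Next I use $\log_2 w=\ln w/\ln 2\le (w-1)/\ln 2$, which gives in all cases
\[
\ell_i\le \mathbf 1_{\{w_i>1\}}\Bigl(1+\frac{w_i-1}{2\ln 2}\Bigr)\le 1+\frac{w_i}{2\ln 2}.
\]
Summing over $i$ and using $\sum_i w_i\le d$ yields $\sum_i\ell_i\le d+d/(2\ln2)$. Adding the $d$ terminators, the $d$ sign bits and the $O(\log\log(dB))$ bits for $\idx_\sigma$ gives the claimed total of $\left(3+\frac{1}{2\ln2}\right)d+O(\log\log(dB))$ bits.

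The only delicate point is converting the ceiling of a logarithm into a quantity that is linear in $w_i$, since the constraint is only on $\sum_i w_i$. A concave bound such as Jensen's inequality on $\log$ alone would not handle the $+1$ from the ceiling together with the truncation at $0$. The elementary inequality $\ln w\le w-1$, applied only on the event $\{w_i>1\}$, handles both at once. It also gives a deterministic bound, so no expectation over the sign bits is needed.
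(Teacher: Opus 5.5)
Your proof is correct and follows the paper's argument: use $\sigma\ge s$ and the orthogonality of $HD$ to get $\sum_i v_i^2/\sigma^2\le d$, bound $\ell_i\le 1+\log_2^+(|v_i|/\sigma)$, linearize the logarithm in $v_i^2/\sigma^2$, and then add the $d$ unary terminators, the $d$ sign bits and the $O(\log\log(dB))$ bits for $\idx_\sigma$. The only difference is cosmetic: you use $\ln w\le w-1$ where the paper uses $\ln z\le z^2/2$ with $z=\sqrt w$. Incidentally, had you kept the $-1$, your inequality gives $\ell_i\le w_i$ and hence $\sum_i\ell_i\le d$, so the total is in fact at most $3d+O(\log\log(dB))$ bits.
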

\begin{proof}

We consider the non-trivial case $\idx_\sigma > 0$. Recall that $v = HD r$. Also recall that \textsc{ScalarDequant} outputs $\sigma = \tau_B 2^{\idx_{\sigma}-1} = \tau_B 2^{\lceil \log_2 (s/\tau_B)\rceil }$. By this definition, $\sigma \geq s := \frac{\|r\|_2}{\sqrt{d}}$. 
Therefore, if we write $ Z_i:=\frac{|v_i|}{\sigma}$, then
\[
  \sum_{i=1}^d Z_i^2
  =
  \frac{\norm{HDr}_2^2}{\sigma^2}
  =
  \frac{\norm r_2^2}{\sigma^2}
  \le d
\]
where the second step follows from the fact that $HD$ is orthogonal. 
Then by the definition of the level,
\[
  \ell_i\le 1+\log_2^+(Z_i),
\]
where $\log_2^+(s)=\max\{0,\log_2 s\}$. Since
$\log s\le s^2/2$ for $s\ge1$,
\[
  \log_2^+(Z_i)\le \frac{Z_i^2}{2\ln 2}.
\]
Thus
\[
  \sum_{i=1}^d(\ell_i+1)
  \le
  \left(2+\frac{1}{2\ln 2}\right)d.
\]

This bounds the bit usage of recording the levels $\ell_1, \ldots, \ell_d$. Finally, when $\|r\|_2 \leq 2$, $s = \frac{\|r\|_2}{\sqrt{d}} \in [0, \frac{2}{\sqrt{d}}]$, thus as mentioned in \cref{sec:inner-product}, $\idx_\sigma$ can be encoded using 
\[\lceil\log_2(\lceil \log_2 (\frac{2}{\sqrt{d}}/\tau_B) \rceil +2)\rceil = O(\log\log(dB))\] 
bits.
$\lambda_1, \ldots, \lambda_d$ can be stored trivially in $d$ total bits. Combining these give the claimed bound.

\end{proof}

\begin{remark}[Expected bits per level]
    \cref{lem:bits-usage} gives a deterministic $O(d)$ bound on the total number of bits. Alternatively, we can also show that each level $\ell_i$ can be stored in $O(1)$ bits {\it in expectation}.
    To that effect,  define
\[
  Y_i=\frac{v_i}{s}
  =
  \frac{(HDr)_i}{s}.
\]
The coefficient vector of $Y_i$ has norm one, since
$|H_{ij}|=d^{-1/2}$ and $s=\norm{r}/\sqrt d$. Hence
\[
  \Prb(|Y_i|>t)\le 2e^{-t^2/2}.
\]
For $k\ge1$, the event $\{\ell_i\ge k\}$ implies
\[
  |v_i|>\sigma 2^{k-1}\ge s2^{k-1}.
\]
Therefore
\[
  \Prb(\ell_i\ge k)
  \le
  2\exp\left(-\frac{2^{2(k-1)}}{2}\right),
\]
so the expected unary length of each level is bounded by an absolute constant.
\end{remark}

\begin{remark}[Entropy coding the levels]
One could replace the unary encoding of the levels $\ell_i$ by a Huffman (or other
entropy) code.  This only gives a small improvement: since
\[
  \Pr(\ell_i\ge k)\le 2\exp\left(-2^{2(k-1)}/2\right)\qquad (k\ge2),
\]
the level distribution has doubly-exponential tails, and its entropy is bounded
by a small absolute constant.  In particular,
\[
  H(\ell_i)\le \frac{2}{e\ln2}
  +\sum_{k\ge2} a_k\log_2(1/a_k)<1.579,
  \qquad
  a_k=2\exp\left(-2^{2(k-1)}/2\right).
\]

Thus a prefix code would use, on average, fewer than $(H(\ell_i)+1)+1<3.579$ bits per
coordinate including the sign bit, compared with the unary deterministic bound
$3+1/(2\log 2)\approx3.72$.
\end{remark}

\subsection{Error Bounds for the Residual Quantizer} \label{sec:res-error-bound}

In the next lemma, we bound the error of the residual quantizer (Algorithm~\ref{alg:residual_quantizer}). This guarantee will  later be used in the analysis of our inner product quantizer (Algorithm~\ref{alg:inner_product_quantizer}).

\begin{lemma}
\label{thm:residual-compressor} Let $B\ge2$ and let $r,y\in\R^d$.
Run the quantization procedure of \Cref{alg:residual_quantizer}, using \cref{alg:scalar_quantization}
for the residual scale. Then
\[
  \E |\ip{y}{\widehat r-r}|^2
  \le
  13\frac{\norm{y}^2(\norm{r}^2+B^{-2})}{d}.
\]
\end{lemma}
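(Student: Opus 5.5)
We split on whether the scale quantizer fires. If $\idx_\sigma=0$, then $s=\norm{r}/\sqrt d<\tau_B=1/(dB)$ and $\widehat r=0$. Cauchy--Schwarz then gives
\[
|\ip{y}{r}|^2\le \norm{y}^2\norm{r}^2=\norm{y}^2 d s^2<\norm{y}^2/(dB^2),
\]
which is within the claimed bound.

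In the main case $\idx_\sigma>0$ we have $s\le\sigma<2s$. We first condition on $D=D_{\mathrm{res}}$ and use the randomness of the sign bits $\lambda_i$. These are independent given $(D,r)$ and satisfy $\E[R_i\lambda_i\mid D,r]=v_i$. Hence $\E[q\mid D]=v=HDr$, and so $\E[\widehat r\mid D]=DH^\top HDr=r$. Writing $w:=HDy$, we get
\[
\ip{y}{\widehat r-r}=\ip{HDy}{q-v}=\sum_i w_i(q_i-v_i),
\]
where the summands are conditionally independent and mean zero. It follows that
\[
\E\bigl[|\ip{y}{\widehat r-r}|^2\mid D\bigr]=\sum_i w_i^2\,\mathrm{Var}(q_i\mid D)=\sum_i w_i^2(R_i^2-v_i^2)\le\sum_i w_i^2R_i^2 .
\]

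Next we bound $R_i$ pointwise in terms of $v_i$. If $\ell_i=0$ then $R_i=\sigma$. If $\ell_i\ge1$ then $|v_i|>\sigma2^{\ell_i-1}=R_i/2$. In both cases
\[
R_i^2\le \sigma^2+4v_i^2\le 4s^2+4v_i^2 .
\]
(A slightly sharper split gives $R_i^2-v_i^2\le \sigma^2+3v_i^2$. That sharper form is presumably where the $A^2+3Y^2$ shape of \cref{lm:fourth-moment} comes from. I will use whichever bound yields the constant 13.)

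Now normalize. Write $w_i=\frac{\norm{y}}{\sqrt d}X_i$ and $v_i=\frac{\norm{r}}{\sqrt d}Y_i$, where $X_i=\sum_j a_j\eta_j$ and $Y_i=\sum_j b_j\eta_j$ with unit vectors $a_j=\sqrt d H_{ij}y_j/\norm{y}$ and $b_j=\sqrt dH_{ij}r_j/\norm{r}$, exactly as in \cref{lem:Hadamard-Rademacher}. Also $\sigma=\kappa_0 s$ with $\kappa_0\in[1,2)$. This gives
\[
w_i^2(\sigma^2+3v_i^2)=\frac{\norm{y}^2\norm{r}^2}{d^2}X_i^2(\kappa_0^2+3Y_i^2).
\]
The delicate point is that $\sigma$ depends only on $\norm{r}$, not on $D$, so $\kappa_0$ is deterministic given $r$. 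This is what allows \cref{lm:fourth-moment} with $A=\kappa_0\le2$ to bound the expectation over $D$ by $4+9=13$. Summing over the $d$ coordinates yields
\[
\E|\ip{y}{\widehat r-r}|^2\le 13\norm{y}^2\norm{r}^2/d .
\]
Combined with the trivial case, this gives the claimed $13\norm{y}^2(\norm{r}^2+B^{-2})/d$.

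The main obstacle is the variance bound $R_i^2-v_i^2\le\sigma^2+3v_i^2$ and keeping $\sigma$ independent of $D$. If the precise constant fails, the cruder bound $R_i^2\le\sigma^2+4v_i^2$ with the mixed fourth moment still gives a constant of order $4+12=16$. Care is then needed to reach 13 exactly.
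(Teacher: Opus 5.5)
Your proof is correct and follows the paper's argument essentially line for line. You condition on $D$ to get the conditional variance $\sum_i w_i^2(R_i^2-v_i^2)$, bound $R_i^2-v_i^2\le s^2(\kappa_0^2+3Y_i^2)$, and apply \cref{lm:fourth-moment} with the $D$-independent constant $\kappa_0=\sigma/s<2$. The ``sharper'' bound $R_i^2-v_i^2\le\sigma^2+3v_i^2$ that you flag as the main obstacle is immediate: subtract $v_i^2$ from your $R_i^2\le\sigma^2+4v_i^2$. So the hedging about a constant of $16$, and the intermediate step that discards $-v_i^2$, are unnecessary.
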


\begin{proof}
If $y=0$, the claim is immediate. So we consider the non-trivial case where $y \neq 0$. Let $s = \|r\|/\sqrt{d}$. Also, let $\idx_\sigma$ be the output of
$\textsc{ScalarQuant}(s)$ and let $\sigma$ be the output of $\textsc{ScalarDequant}(\idx_\sigma)$. If $\sigma=0$, then $\widehat r=0$ and
$s <1/(dB)$, so $\norm{r}^2<1/(dB^2)$. Hence
\[
  |\ip{y}{\widehat r-r}|^2
  \le \norm{y}^2\norm{r}^2
  \le \frac{\norm{y}^2}{dB^2}
  \le
  13\frac{\norm{y}^2(\norm{r}^2+B^{-2})}{d}.
\]

Now assume $\sigma>0$. Set $ A=\frac{\sigma}{s}$.
As mentioned in \cref{sec:inner-product}, by the definition of $\sigma$ we have $1\le A<2$. Let $u=HDy$ and $v=HDr$. Conditional on $D,r$, the
variables $q_i-v_i$ are independent and centered, and
\[
  \E\left[|\ip{y}{\widehat r-r}|^2\mid D,r\right]
  =
  \sum_{i=1}^d u_i^2(R_i^2-v_i^2).
\]
Define
\[
  X_i=\frac{\sqrt d\,u_i}{\norm{y}},
  \qquad
  Y_i=\frac{v_i}{s}.
\]
Since $R_i=\sigma2^{\ell_i}=A s2^{\ell_i}$ and $\ell_i$ is chosen to be the smallest level such that $|v_i| \leq R$, either $\ell_i = 0$ in which case $R_i = As$, or $\ell_i \geq 1$, in which case $|v_i| > As 2^{\ell_i-1}$ and multiplying by $2$ gives $R_i < 2 |v_i|$. In both cases, we have
\[
  R_i^2-v_i^2
  \le
  s^2\bigl(A^2+(2^2-1)Y_i^2\bigr).
\]
Therefore
\[
  \E |\ip{y}{\widehat r-r}|^2
  \le
  \frac{\norm{y}^2\norm{r}^2}{d^2}
  \sum_{i=1}^d
  \E\bigl[X_i^2\bigl(A^2+3Y_i^2\bigr)\bigr].
\]
For each $i$, the coefficient vectors defining $X_i$ and $Y_i$ have norm one,
so the mixed fourth-moment lemma gives
\[
  \E\bigl[X_i^2\bigl(A^2+3Y_i^2\bigr)\bigr]
  \le A^2+9.
\]
Since $A<2$, this yields
\[
  \E |\ip{y}{\widehat r-r}|^2
  \le
  13\frac{\norm{y}^2\norm{r}^2}{d}
  \le
  13\frac{\norm{y}^2(\norm{r}^2+B^{-2})}{d}.
\]
\end{proof}

\subsection{Proof of \cref{thm:inner-product-error}}

We now prove Theorem~\ref{thm:inner-product-error}, restated here for convenience of the reader:

\begin{theorem}[Repeating \cref{thm:inner-product-error}] Let $x \in \calS^{d-1}$ be a unit vector and let
$y \in \R^d$. Let $\widehat{x}$ be the reconstructed vector produced by \cref{alg:inner_product_quantizer} with a base bit-width $b$.
Then
\[
    \E |\ip{y}{\widehat{x} - x}|^2
    \le
    13\left(\frac{\pi\sqrt{3}}{2}+1+o(1)\right)
    \frac{\norm{y}^2}{d \cdot 4^b},
\]
where the $o(1)$ term vanishes as $b \to \infty$, uniformly over $d$ and $x$.

Moreover, the output of \textsc{Quantize}$(x)$ from \cref{alg:inner_product_quantizer} can be stored using 
\[d b + (3 + \frac{1}{2\ln 2})d + O(\log(b + \log d))\]
total bits. 
\end{theorem}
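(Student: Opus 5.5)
The plan is to decompose $\widehat x - x = \widehat r - r$, where $r = x - \widetilde x$ is the residual after the base stage and $\widehat r$ is its residual-quantized reconstruction. The inner-product error is then exactly the residual quantizer's error, and two earlier results combine: \cref{thm:residual-compressor} bounds the error in terms of $\norm{r}^2$, and \cref{thm:vector-main-biased} bounds $\E\norm{r}^2$.

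First, the residual stage uses an independent sign matrix $D_{\mathrm{res}}$ and independent sign bits $\lambda_i$. So, conditioning on $D_{\mathrm{base}}$ and $U$ (hence on $r$), \cref{thm:residual-compressor} applies verbatim and gives
\[
\E\bigl[|\ip{y}{\widehat r - r}|^2 \mid D_{\mathrm{base}},U\bigr] \le 13\,\frac{\norm{y}^2(\norm{r}^2 + B^{-2})}{d}.
\]
Taking expectation over $D_{\mathrm{base}}$ and $U$, it remains to show $\E\norm{r}^2 \le (\pi\sqrt3/2 + o(1))B^{-2}$.

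Here $r = x - \Pi(\bar x)$, where $\bar x = D_{\mathrm{base}}H^\top\widetilde y$ is the reconstruction of \cref{alg:hsgb}. Since $x$ lies in the closed unit ball, which is convex, the projection $\Pi$ is nonexpansive, so $\norm{x - \Pi(\bar x)} \le \norm{x - \bar x}$. \Cref{thm:vector-main-biased} then gives the bound, uniformly in $d$ and $x$. Summing the two terms yields $13(\pi\sqrt3/2 + 1 + o(1))\norm{y}^2/(d\,4^b)$.

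For the storage claim:
\begin{itemize}
\item The base indices take $db$ bits.
\item Since $\Pi(\bar x)$ lies in the unit ball and $\norm{x} = 1$, we have $\norm{r} \le 2$. Hence \cref{lem:bits-usage} bounds the residual output by $(3 + \frac{1}{2\ln 2})d + O(\log\log(dB))$ bits.
\item Since $B = 2^b$, $O(\log\log(dB)) = O(\log(b + \log d))$.
\end{itemize}

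The only delicate point is justifying the conditioning step: the residual randomness must be independent of $r$, and the conditional error formula of \cref{thm:residual-compressor} must hold for a fixed $r$. This holds by construction. The projection step relies on nonexpansiveness of projection onto a convex set containing $x$, which is standard.
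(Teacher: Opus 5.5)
Your proposal is correct and follows the paper's proof essentially step for step. You condition on $D_{\mathrm{base}},U$ to apply \cref{thm:residual-compressor}, bound $\E\norm{x-\widetilde x}_2^2$ via \cref{thm:vector-main-biased} plus nonexpansiveness of the projection $\Pi$ onto the unit ball, and obtain the storage bound from the $db$ index bits plus \cref{lem:bits-usage} using $\norm{r}\le 2$ (your remark that $O(\log\log(dB))=O(\log(b+\log d))$ for $B=2^b$ just makes explicit a step the paper leaves implicit).
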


\begin{proof}
We write $B=2^b$. Recall that the two-stage quantizer outputs
$\widehat{x}=\widetilde{x}+\widehat r$, where
$r=x-\widetilde{x}$. Condition on the randomness of the first stage, so
that $r$ is fixed. By \cref{thm:residual-compressor},
\[
    \E_{D_{\mathrm{res}}}
    \left[|\ip{y}{\widehat r-r}|^2\mid D_{\mathrm{base}},U\right]
    \le
    13\frac{\norm{y}^2(\norm{r}^2+B^{-2})}{d}.
\]
Taking expectation over the first stage gives
\[
    \E |\ip{y}{\widehat{x}-x}|^2
    \le
    13\frac{\norm{y}^2}{d}
    \left(\E_{D_{\mathrm{base}},U}\norm{x-\widetilde x}_2^2+B^{-2}\right).
\]

If we use $\overline x$ to denote $\widetilde x$ before projecting to the unit ball via $\Pi$, then by \cref{thm:vector-main-biased}, we have
\[
    \E_{D_{\mathrm{base}},U}\norm{x-\overline x}_2^2
    \le
    \frac{\pi\sqrt3/2+o_B(1)}{B^2}.
\]
Since the unit ball is a closed convex set and $x \in \mathcal{S}^{d-1}$, point-wise it always hold that
\[\|x - \widetilde x \|^2_2 = \|x - \Pi \overline x \|^2_2 \leq \|x - \overline x\|^2_2\]
thus 
\[
\E_{D_{\mathrm{base}},U}\norm{x-\widetilde x}_2^2
    \le
    \frac{\pi\sqrt3/2+o_B(1)}{B^2}.
\]
Substituting this bound proves the theorem. The total bit usage is $db$ bits for $\idx$ plus the bit usage from \cref{lem:bits-usage}. Note that we have $\|r\| \leq 2$ because after projecting onto the unit ball, both $\|x\|, \|\widetilde x\|\leq 1$.

\end{proof}

\end{document}